\newcolumntype{F}{>{\centering\arraybackslash}p{1.9cm}}
\newcolumntype{G}{>{\centering\arraybackslash}p{1.2cm}}
\newcolumntype{H}{>{\centering\arraybackslash}p{0.4cm}}
\newcolumntype{I}{>{\centering\arraybackslash}p{0.92cm}}
\newcolumntype{Z}{>{\centering\arraybackslash}p{-0.92cm}}
\newtheorem{theorem}{Theorem}
\newtheorem{lemma}{Lemma}
\newtheorem{corollary}{Corollary}
\newtheorem{remark}{Remark}
\newtheorem{example}{Example}
\def\namedlabel#1#2{\begingroup
    #2%
    \def\@currentlabel{#2}%
    \phantomsection\label{#1}\endgroup
}
\newcommand\sde{(\ref{eq:SDE})\xspace}
\newcommand\A[1]{\hyperref[A#1]{(A#1)}\xspace}
\newcommand\Bassum[1]{\hyperref[B#1]{(B#1)}\xspace}
\newcommand\AverageSmallMatrix[1]{{%
  \small\arraycolsep=1\arraycolsep\ensuremath{\begin{bmatrix}#1\end{bmatrix}}}}
\newcommand\rev[1]{#1}
\def\Inf{\operatornamewithlimits{inf\vphantom{p}}}
\newcommand{\Prob}{\mathbb{P}}
\newcommand{\bProb}{\bar{\mathbb{P}}}
\newcommand{\dd}{\textrm{d}} 
\newcommand{\dt}{\textrm{d}t} 
\newcommand{\bomega}{\bar\omega}
\newcommand{\G}{\mathcal{G}}
\newcommand{\F}{\mathcal{F}}
\newcommand{\Obs}{\mathcal{O}}
\newcommand{\bD}{\mathbb{D}}
\newcommand{\U}{\mathcal{U}}
\newcommand{\E}{\mathbb{E}}
\newcommand{\R}{\mathbb{R}}
\newcommand{\N}{\mathbb{N}}
\newcommand{\ocp}{\textbf{OCP}\xspace}
\newcommand{\socp}{\textbf{SOCP}\xspace}
\newcommand{\socphat}{\widehat{\socp}\xspace}
\newcommand{\cvar}{\textrm{CV@R}\xspace}
\newcommand{\avar}{\textrm{AV@R}\xspace}
\newcommand{\var}{\textrm{V@R}\xspace}
\newcommand{\ipopt}{\textrm{IPOPT}\xspace}
\newcommand{\osqp}{\textrm{OSQP}\xspace}
\begin{document}

\title{Risk-Averse Trajectory Optimization\\via Sample Average Approximation}
\author{\authorblockN{Thomas Lew$^{1}$, \  Riccardo Bonalli$^{2}$, \ Marco Pavone$^{1}$
\thanks{This work was supported by the NASA University Leadership Initiative (grant\#80NSSC20M0163) and the Air Force under an STTR award with Altius Space Machines, but solely reflects the opinions and conclusions of its authors. Toyota Research Institute provided funds to support this work.}}
\vspace{1mm}
\\
\authorblockA{\scalebox{0.9}{$^{1}$Department of Aeronautics and Astronautics, Stanford University}}
\authorblockA{\scalebox{0.9}{$^{2}$Laboratory of Signals and Systems, University of Paris-Saclay, CNRS, CentraleSup\'elec}}
\vspace{-6mm}
}

\maketitle

\begin{abstract}
Trajectory optimization under uncertainty underpins a wide range of applications in robotics. 
However, existing methods are limited in terms of reasoning about sources of epistemic and aleatoric uncertainty, space and time correlations, nonlinear dynamics, and non-convex constraints. 
In this work, %
we first introduce a continuous-time planning formulation with an average-value-at-risk constraint over the entire planning horizon. %
Then, we propose a sample-based approximation that unlocks an efficient \rev{and} general-purpose %
algorithm for risk-averse trajectory optimization. 
We prove that the method is asymptotically optimal and derive finite-sample error bounds. 
Simulations demonstrate the high speed and reliability of the  approach on problems with stochasticity in nonlinear 
dynamics, obstacle fields, interactions, and terrain parameters.
\end{abstract}

\IEEEpeerreviewmaketitle

\section{Introduction}\label{sec:introduction}
Accounting for uncertainty in the design of decision-making systems 
is key to achieving reliable robotics autonomy. 
Indeed, modern autonomy stacks account for uncertainty \cite{Agha2021,Tranzatto2022}, whether it comes from 
noisy sensor measurements (e.g., due to perceptually-degraded conditions or a lack of features \cite{Falanga2018,Agha2021}), 
dynamics (e.g., due to disturbances and difficult-to-characterize nonlinearities \cite{deisenroth2015,hewing2018cautious,LewEtAl2022}), 
properties of the environment (e.g., due to unknown terrain properties for legged robots \cite{Drnach2021,Fan2021} and Mars rovers \cite{Higa2019,Cunningham2017}), or 
interactions with other agents (e.g., in autonomous driving \cite{SalzmannIvanovicEtAl2020,IvanovicElhafsiEtAl2020,Ren2023}). 

Although trajectory optimization under uncertainty underpins a wide range of applications, 
existing approaches %
often make simplifying assumptions and approximations that reduce the range of problems they can reliably deal with. Specifically, there is a lack of methods capable of simultaneously handling
\begin{itemize}[leftmargin=3mm]
\item sources of aleatoric uncertainty (e.g., external disturbances) and epistemic uncertainty (e.g., a drone transporting a payload of uncertain mass that introduces time correlations over the state trajectory) 
    that depend on state and control variables (e.g., interactions between different agents),
\item uncertainty of arbitrary (non-Gaussian) distribution correlated over time and space (e.g., uncertain terrain properties), 
\item uncertain nonlinear dynamics and non-convex constraints, 
\item \rev{trajectory-wise} constraints that %
bound the risk of constraints violations over the entire duration of the problem\rev{.} %
\end{itemize}
\rev{As distinct systems may need specific numerical schemes, such methods should also have discretization-independent guarantees.} 
Table \ref{table:approaches} summarizes the capabilities of existing methods.

\newpage

\begin{figure}[!t]
\centering
\includegraphics[width=1\linewidth]{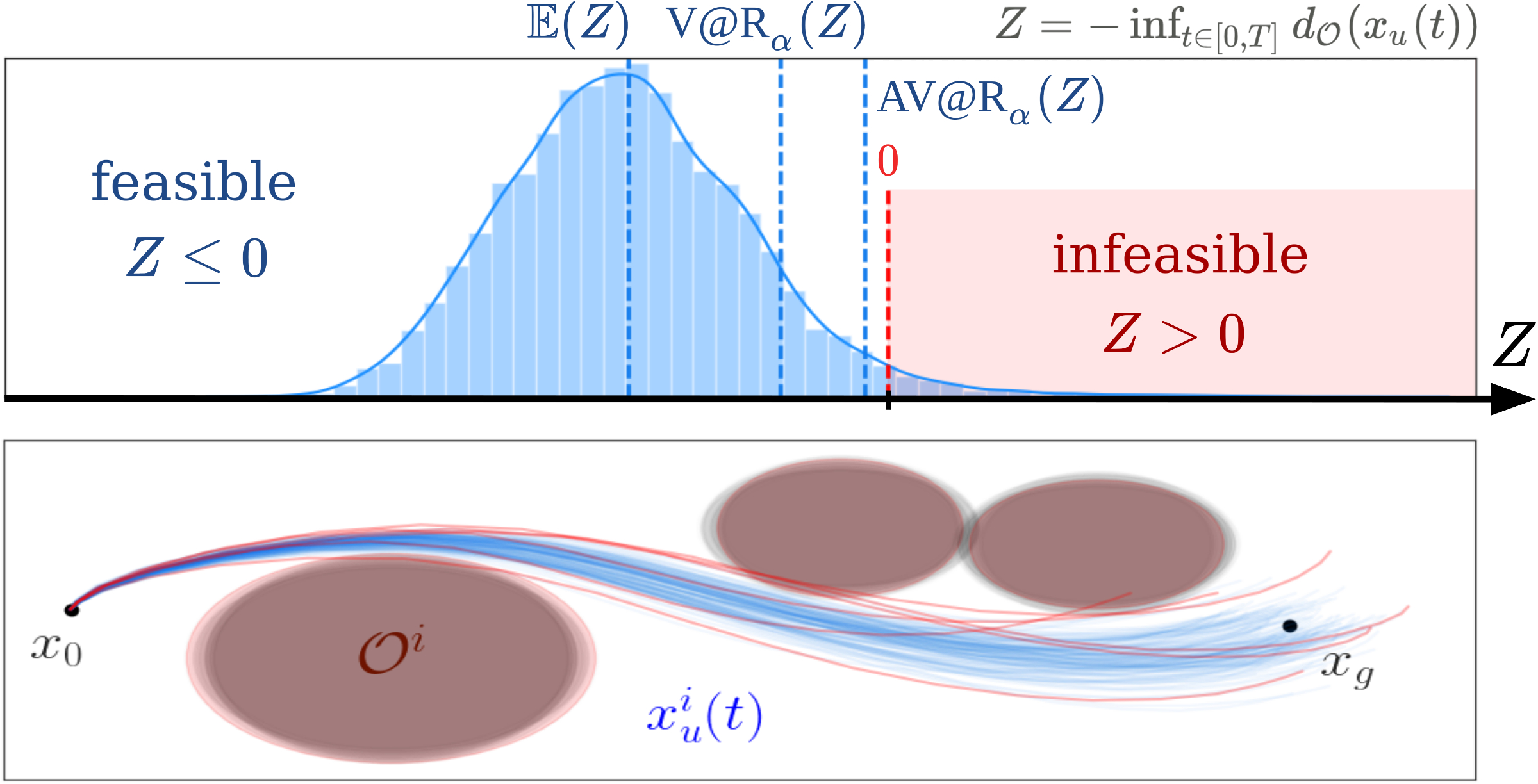}
\caption{Full-horizon collision statistics. We consider problems with average-value-at-risk (\avar) constraints enforced over the entire trajectory of the system. Uncertain obstacles and nonlinear dynamics, with uncertain inertia properties and  disturbances modeled as Brownian motion, introduce sources of aleatoric and epistemic uncertainty that are notoriously challenging to handle with existing approaches. %
\vspace{-6mm}
}%
\label{fig:main_figure_drone}
\end{figure}

\textbf{Contributions:} 
We introduce an efficient  risk-averse trajectory optimization algorithm satisfying the previous desiderata:
\begin{itemize}[leftmargin=3mm]
    \item First, we propose a risk-averse planning formulation with average-value-at-risk (\avar\cite{Shapiro2014}) constraints enforced over the entire planning horizon. 
    This formulation is applicable to a wide range of robotics problems with sources of aleatoric and epistemic uncertainty. 
	\rev{Its continuous-time nature %
	guides the design of algorithms whose properties are independent of the chosen time discretization scheme (see Remark \ref{remark:pointwise_cc}). }%
    Enforcing \avar constraints enables accounting for tail events and facilitates numerical resolution due to their convexity properties (Remark \ref{remark:avar_easy_optimize_vs_ccs}). \rev{In addition}, solving this formulation \rev{gives} feasible solutions to notoriously challenging problems with joint chance constraints (Remark \ref{remark:ccs_vs_avar}).
    \item Second, we propose a sample-based approximation rooted in the sample average approximation approach \cite{Pagnoncelli2009,Shapiro2014,LewBonalliEtAl2022}. 
    We derive asymptotic optimality guarantees  (Theorem \ref{thm:optimality}) and finite-sample error bounds (Lemma \ref{lem:finite_sample_socp}) for this reformulation. 
    The analysis relies on mild assumptions (\A{1}-\A{4}), which enables considering a wide range of uncertainty sources that introduce spatial and temporal correlations and depend on state and control variables. %
    The resulting approximated problem is smooth and sparse, facilitating efficient numerical resolution using off-the-shelf optimization tools. 
\item Finally, we show the reliability and speed of the proposed approach on problems with uncertain nonlinear 
dynamics, obstacle fields, interactions, and terrain parameters.
\end{itemize}

\begingroup
\def\arraystretch{1.}
\setlength\tabcolsep{1.75mm}
\begin{table*}[!t]
\caption{Capabilities of existing approaches for risk-constrained planning and trajectory optimization.}
\label{table:approaches}
\centering
\begin{tabular}{c|c|c|c|c||c|c|c|c|}
\cline{2-9}
& 
	\begin{tabular}{c} 
	dynamics\\uncertainty
	\end{tabular}
& 
	\begin{tabular}{c} 
	constraints\\uncertainty
	\end{tabular}
& 
	\begin{tabular}{c} 
	aleatoric\\uncertainty
	\end{tabular}
& 
	\begin{tabular}{c} 
	epistemic\\uncertainty
	\end{tabular}
& 
	\begin{tabular}{c} 
	nonlinear\\dynamics
	\end{tabular}
& 
	\begin{tabular}{c} 
	non-convex\\constraints
	\end{tabular}
& 
	\begin{tabular}{c} 
	non-Gaussian \\ 
	distributions
	\end{tabular}
& 
	\begin{tabular}{c} 
	time \& space \\ 
	correlations
	\end{tabular}
\\
\cline{1-9}
\multicolumn{1}{ |c| }{\cite{Hakobyan2019}} & 
 & $\bullet$ &  & $\bullet$
	& $\bullet$ &  & %
 $\bullet$ & $\bullet$
\\
\cline{1-9}
\multicolumn{1}{ |c| }{\cite{JansonSchmerlingEtAl2015b}} & 
 $\bullet$  &   &  $\bullet$ & 
	 &  & $\bullet$ & %
  $\bullet$ & $\bullet$
\\
\cline{1-9}
\multicolumn{1}{ |c| }{\cite{Safaoui2021}} &
	$\bullet$ & & $\bullet$ &
	 & 
	$\bullet$ & $\bullet$ & %
 $\bullet$ &
\\
\cline{1-9}
\multicolumn{1}{ |c| }{\cite{Thomas2022}} &
	$\bullet$ & $\bullet$ & $\bullet$ &
	 & 
	$\bullet$ & $\bullet$ & %
 &
\\
\cline{1-9}
\multicolumn{1}{ |c| }{\cite{LewBonalliEtAl2020}} &
	$\bullet$ & & $\bullet$ &
	 & 
	$\bullet$ & $\bullet$ & %
 &
\\
\cline{1-9}
\multicolumn{1}{ |c| }{\cite{hewing2018cautious}} &
	$\bullet$ & & $\bullet$ &
	 & 
	$\bullet$ & & %
 &
\\
\cline{1-9}
\multicolumn{1}{ |c| }{\cite{Drnach2021,Jasour2021}} &
	& $\bullet$ & & $\bullet$
	 & 
	$\bullet$ & $\bullet$ & %
 $\bullet$ &
\\
\cline{1-9}
\multicolumn{1}{ |c| }{\cite{Han2022}} &
	$\bullet$ & $\bullet$ & $\bullet$ & $\bullet$ 
	 & 
	$\bullet$ & $\bullet$ & %
 $\bullet$ & 
\\
\cline{1-9}
\multicolumn{1}{ |c| }{\cite{CastilloLopez2020}} &
	$\bullet$ & $\bullet$ & $\bullet$ &  
	 & 
	$\bullet$ & $\bullet$ %
 & &
\\
\cline{1-9}
\multicolumn{1}{ |c| }{\cite{daSilvaArantes2019,Oldewurtel2008}} &
	$\bullet$ & & $\bullet$ &
	 & 
	& %
 & &
\\
\cline{1-9}
\multicolumn{1}{ |c| }{\textbf{This work}} & 
	$\bullet$ & $\bullet$ & $\bullet$ & $\bullet$
& 
	$\bullet$ & $\bullet$ & %
 $\bullet$ & $\bullet$
\\ 
\cline{1-9}
\end{tabular}
\vspace{-4mm}
\end{table*}
\endgroup

\noindent This work indicates that risk-averse planning problems can be efficiently tackled via trajectory optimization. These findings challenge the popular belief that Monte-Carlo-based planning methods are computationally expensive \cite{Thomas2022,FreyRSS2020,LewBonalliEtAl2020}. %
The key is a particular \avar-constrained formulation which, when coupled with a sample-based approximation and a smooth, sparse reformulation, 
unlocks the use of readily available optimization tools that enable efficient numerical resolution. %

\section{Related work}\label{sec:related_work}
Risk-averse control, planning, and trajectory optimization  methods typically enforce chance constraints \rev{(CCs)} or average-value-at-risk  (\avar) constraints. %
\avar constraints are more conservative than \rev{CCs}, as they account for tail events, %
and have desired properties for robotics applications \cite{ChowPavone2014}. 
Two types of risk constraints  are popular in the literature. %
First, \textit{pointwise} risk constraints enforce individual constraints %
at each time $t\in[0,T]$, %
and thus do not provide sufficient constraints satisfaction guarantees over the \textit{entire} time horizon $[0,T]$,  see Remark \ref{remark:pointwise_cc}. 
Instead, it is preferable to enforce \textit{joint} risk constraints that should hold jointly at all times. 
To enforce such joint risk constraints, many approaches bound the \rev{total} risk of constraints violations using Boole's inequality \cite{MasahiroOno2008,Ma2012,CastilloLopez2020},  
which neglects time correlations of uncertainty. \rev{As a result, these methods yield} solutions that \rev{tend to be conservative and} %
sensitive to the chosen time discretization   \cite{JansonSchmerlingEtAl2015b,FreyRSS2020,LewEtAl2022}.

The main challenge in risk-averse planning and control lies in the ability of efficiently evaluating 
the risk of constraints violations %
(e.g., the probability or the \avar of constraints violations over the planning horizon). 
For instance, estimating the joint probability of collision of a robotic system until it completes a task requires  accounting for the effects of disturbances across multiple timesteps, and %
generally requires evaluating an expectation integral %
over time and space, which is challenging for general uncertainty distributions and constraints. %
For this reason, many methods in the literature assume independent Gaussian-distributed random variables defining the problem  \cite{hewing2018cautious,Fan2021,BonalliLewESAIM2022,LewBonalliEtAl2022}, %
polytopic constraints sets \cite{hewing2018cautious,Hakobyan2019,Ren2023}, 
or use Boole's inequality to distribute risk over different constraints (e.g., obstacles) \cite{hewing2018cautious,LewBonalliEtAl2020,Safaoui2021} which neglects  correlations of uncertainty over the statespace. 
These are reasonable assumptions and approximations in some applications. However, these approaches may lead to infeasibility, under-estimate the true risk of constraints violation, or hinder performance. 
There is a lack of formulations and solution algorithms capable of truly capturing different sources of uncertainty%
, see Table \ref{table:approaches}. %

Monte-Carlo methods fulfill the desideratas introduced in Section \ref{sec:introduction}, but are often considered to be computationally expensive \cite{Thomas2022,FreyRSS2020,LewBonalliEtAl2020}. 
Existing Monte-Carlo risk-averse trajectory optimization and planning approaches are limited to problems with randomly-moving obstacles of fixed polytopic shapes for deterministic linear systems \cite{Hakobyan2019}, or require solving multiple problems for different paddings of the obstacles as a function of the constraints violation probability estimated from samples \cite{JansonSchmerlingEtAl2015b}. Both approaches are computationally expensive: the first requires using mixed-integer programming, %
and the latter requires solving multiple instances of the problem \cite{JansonSchmerlingEtAl2015b}. Methods based on the scenario approach also use samples to enforce chance constraints \cite{Calafiore2006b,Schildbach2014,Bernardini2009,Ahn2022}, but they are limited to convex problems (e.g., to systems with linear dynamics).

Our proposed \rev{continuous-time} risk-constrained problem formulation is more general than in prior work (see Table \ref{table:approaches}), capturing a broad range of trajectory planning problems. %
In particular, in Section \ref{sec:results}, we study a problem with uncertain nonlinear dynamics, sources of epistemic and aleatoric uncertainty, and moving obstacles, and a legged robot navigating over uncertain terrain whose friction coefficient varies over space. 
Importantly, \rev{the proposed reformulation} is amenable to \rev{efficient numerical resolution}, %
thanks to \rev{using a particular} \avar-constrained formulation %
approximated using samples, %
unlocking the use of \rev{reliable off-the-shelf} optimization tools.%

\section{Risk functions}\label{sec:risk}
Let $(\Omega,\G,\Prob)$ be a probability space \cite{LeGall2022} and $Z:\Omega\to\R$ be a random variable encoding constraints of the form $Z\leq 0$. For instance, $Z$ may denote the minimum negative distance to obstacles and $Z\leq 0$ may denote obstacle avoidance constraints, see Figure \ref{fig:main_figure_drone}. In practice, enforcing constraints with $\Prob$-probability one (i.e., $Z(\omega)\leq 0$ for $\Prob$-almost all $\omega\in\Omega$) is infeasible, e.g., if disturbances are Gaussian-distributed and have unbounded support. Thus, we may enforce \textit{risk constraints} instead. 
We define the \textit{Value-at-Risk} (\var) and \textit{Average Value-at-Risk} (\avar)\footnote{The \avar is also often referred to as the \textit{Conditional Value-at-Risk} (\cvar) in the literature \cite{MajumdarPavone2017}, since $\avar_\alpha(Z)=\E[Z|Z\geq\var_\alpha(Z)]$ under certain regularity assumptions \cite[Theorem 6.2]{Shapiro2014}.} at risk level $\alpha\in(0,1)$  as
\begin{align}
\label{eq:var}
\var_\alpha(Z)&=
\inf_{t\in\R}\Big(t : \Prob(Z>t)\leq\alpha\Big),
\\
\label{eq:avar}
\avar_\alpha(Z)&=\inf_{t\in\R}\Big(
t+\frac{1}{\alpha}\E[\max(Z-t,0)]
\Big).
\end{align}
$\var_\alpha(Z)$ is the $(1-\alpha)$-quantile of $Z$ ($Z>\var_\alpha(Z)$ with probability less than $\alpha$) and $\avar_\alpha(Z)$ is the expected value of values of $Z$ larger than $\var_\alpha(Z)$ \cite[Theorem 6.2]{Shapiro2014}.  
These risk functions yield two types of inequality constraints. 
First, the $\var_\alpha$ gives chance constraints at probability level $\alpha$:
\begin{equation}
\label{eq:var_prob}
\var_\alpha(Z)\leq 0
\iff
\Prob(Z>0)\leq\alpha.
\end{equation}
The $\avar$ gives conservative formulations of chance constraints since $\var_\alpha(Z)\leq\avar_\alpha(Z)$ \cite{Shapiro2014}:
\begin{equation}\label{eq:avar_prob}
\avar_\alpha(Z)\leq 0
\implies
\Prob(Z>0)\leq\alpha.
\end{equation}

\section{Problem formulation}\label{sec:problem_formulation}
We model the uncertain  system in continuous-time using a stochastic differential equation (SDE). 
Let $(\Omega,\G,\F,\Prob)$ be a filtered probability space \cite{LeGall2022}, 
$W$ be a $n$-dimensional Brownian motion on $\Omega$ \cite{LeGall2016} (e.g., the standard Wiener process), $\xi:\Omega\to\R^q$ be a random variable representing $q\in\N$ uncertain parameters,  
$n,m\in\N$ be state and control dimensions, 
$x_0:\Omega\to\R^n$ be uncertain initial conditions, 
$U\subset\R^m$ be a compact control constraint set, $T>0$ be the planning horizon, 
$b:\R^n\times U\times\R^q\to\R^n$ and $\sigma:\R^n\times U\times\R^q\to\R^{n\times n}$ 
be uncertain drift and diffusion coefficients. Given a control trajectory $u:[0,T]\to U$, we define the SDE 
\begin{align*}
\dd x(t)
&=b(x(t),u(t),\xi)\dt+\sigma(x(t),u(t),\xi)\dd W_t, 
\ t\in[0,T],
\nonumber
\\
x(0)&=x_0,
\tag{\textbf{SDE}}
\label{eq:SDE}
\end{align*}
with solution $x_u$. 
Given running and final costs $\ell:\R^n\times U\to \R$ and 
$\varphi:\R^n\to\R$, 
$N\in\N$  inequality and $n_h\in\N$  equality constraints functions 
$G_j:\R^n\times\R^q\to\R$ and $H:\R^n\to\R^{n_h}$, %
a risk parameter $\alpha\in(0,1)$, 
and a control space %
$\U$ (see Sections \ref{sec:theory}-\ref{sec:numerical}),  
we define the optimal control problem (\ocp):
\begin{subequations}
\begin{align}
\ocp: 
\inf_{u\in\U}
\ \  
&\E\left[
\int_0^T\ell(x_u(t),u(t))\dd t 
+
\varphi(x_u(T))
\right]
\label{eq:ocp:cost}
\\[-1mm]
\ \, \text{s.t.}
\quad
&\avar_\alpha\bigg(\sup_{t\in[0,T]}
G(x_u(t),\xi)
\bigg)\leq 0,
\label{eq:ocp:avar}
\\
&\E[H(x_u(T))]=0,
\label{eq:ocp:end}
\\
&x_u\text{ satisfies }\sde,
\label{eq:ocp:sde}
\end{align}
\end{subequations}
where $G(x_u(t),\xi)=\max_{j=1,\dots,N} G_j(x_u(t),\xi)$ is the maximum constraints violation at time $t$ and  we implicitly omit dependencies %
on uncertainties $\omega$. %

This formulation of \ocp captures a broad range of robotics applications, see Section \ref{sec:results} for examples.  
\ocp is challenging to solve due to the non-convexity of $G$, the \avar risk constraint \eqref{eq:ocp:avar}, the non-Gaussianity of the state trajectory $x_u$, and the dependency of all quantities on both epistemic uncertainty (modeled by the uncertain parameters $\xi$) and aleatoric uncertainty (modeled with the SDE in \eqref{eq:ocp:sde}). %

By \eqref{eq:avar}, the \avar constraint \eqref{eq:ocp:avar} in \ocp is equivalent to
\begin{align}
t+\frac{1}{\alpha}\E\bigg[
\max\bigg(
\sup_{s\in[0,T]}
G(x_u(s),\xi)-t,
0\bigg)
\bigg]\leq 0,
\end{align}
so all constraints in \ocp are %
expected value constraints. Thus, solving \ocp amounts to evaluating expectations, which is generally challenging as it involves a nonlinear SDE and general nonlinear constraints functions. We propose a tractable approximation in \rev{Section \ref{sec:saa}. Next}, we discuss important considerations motivating this formulation.
\begin{remark}[\textbf{joint chance constraints (CCs)}]
\label{remark:ccs_vs_avar} 
Solving \ocp yields feasible solutions to problems with joint CCs. 
Indeed, given a constraint $G(x_u(t),\xi)\,{\leq}\,0$ that should hold at all times   jointly with high probability $1-\alpha$, we can define the joint CC
\begin{equation}\label{eq:joint_cc}
\Prob\Big(
\sup_{t\in[0,T]}
G(x_u(t),\xi)>0
\Big)\leq\alpha.
\end{equation}
Thanks to \eqref{eq:var_prob} and \eqref{eq:avar_prob}, a conservative formulation of the joint CC \eqref{eq:joint_cc} is the corresponding constraint \eqref{eq:ocp:avar} in \ocp.
Thus, %
by appropriately defining $G$ and solving \ocp,  constraints that often appear in robotics can be enforced with high probability over the \textbf{entire} state trajectory, see Section \ref{sec:results}.
\end{remark}
\begin{remark}[\textbf{pointwise-in-time risk constraints}]\label{remark:pointwise_cc}
Enforcing
\begin{equation}
\avar_\alpha\left(
G(x_u(t),\xi)
\right)\leq 0\quad\forall t\in[0,T],
\end{equation}
instead (e.g., as in \cite{Hakobyan2019}) does not bound the risk of constraints violation over the entire trajectory. 
Similarly, pointwise-in-time chance constraints (e.g., as in \cite{hewing2018cautious,LewBonalliEtAl2020}) do not bound the probability of constraints violations over the  entire planning horizon. Further, transposing discrete-time risk-averse control strategies to full-horizon settings \rev{via Boole's inequality \cite{MasahiroOno2008,Ma2012,CastilloLopez2020} may lead to infeasibility} as the resolution of the time discretization increases, see  \cite{JansonSchmerlingEtAl2015b,FreyRSS2020,LewEtAl2022} for further discussion. 
Placing the time-wise supremum inside the \avar constraint \eqref{eq:ocp:avar} and accounting for time correlations is key to obtaining \rev{trajectory-wise} constraints satisfaction \rev{guarantees}.
\end{remark}

\begin{example}[\textbf{obstacle avoidance constraints}]
\label{example:obs_avoid} 
$N\in\mathbb{N}$  obstacles $\Obs_j$ of uncertain positions and shapes to be avoided at all times can be captured via signed distance functions (SDFs)
$d_{\Obs_j}:\R^n\times\R^q\to\R$, $(x,\xi)\mapsto d_{\Obs_j(\xi)}(x)$ 
such that  $d_{\Obs_j}(x_u(t))\geq 0$ if and only if the system is collision-free, 
i.e., $x_u(t)\notin\Obs_j$. %
For example, spherical obstacles at $o_j$ of radii $r_j$ can be described by \rev{$d_{\Obs_j}(x)=\|x-o_j\|-r_j$}. 
The fact that $d_{\Obs_j}$ depends on randomness $\omega\in\Omega$ via the uncertain parameters $\xi(\omega)$ allows capturing obstacles of uncertain position and shape. %
For example, ellipsoidal obstacles centered at $o_j$ of shape matrices $Q_j$  can be encoded by the SDFs 
\begin{equation}\label{eq:sdf:ellipsoid}
d_{\Obs_j(\xi(\omega))}(x)=\rev{(x-o_j(\omega))^\top Q_j(\omega) (x-o_j(\omega))-1}.
\end{equation}
Collision avoidance joint CCs %
can be written as
\begin{align}\label{eq:cc_obs}
\Prob\big(%
x_u(t)\notin\Obs_j(\xi)
\
\forall t\in[0,T]\text{ } \forall j=1,\dots,N
\big)
\geq 1-\alpha.
\end{align}
By defining the $N$ constraints function $G_j=-d_{\Obs_j}$, so that %
\begin{equation}\label{eq:example1:G_obs_max}
G(x_u(t),\xi)
=
\max_{j=1,\dots,N}\left(-d_{\Obs_j(\xi)}(x_u(t))
\right),
\end{equation}
the obstacle avoidance joint CC \eqref{eq:cc_obs} is equivalent to \eqref{eq:joint_cc}. 
Thus, by Remark \ref{remark:ccs_vs_avar},  a conservative reformulation of \eqref{eq:cc_obs} is \eqref{eq:ocp:avar}:
\begin{align}\label{eq:avar:obs_avoid}
\avar_\alpha\bigg(
\sup_{t\in[0,T]}\max_{j=1,\dots,N}\left(-d_{\Obs_j(\xi)}(x_u(t))
\right)
\bigg)
\leq
0.
\end{align}
\end{example}

\section{Sample average approximation (SAA)}\label{sec:saa}
Given $M$ independent and identically distributed (iid) samples $\omega^i\in\Omega$, captured by the multi-sample $\bomega=(\omega^1,\omega^2,\dots)$ with joint distribution $\bProb$, we approximate \ocp by the following sampled optimal control problem (\socp): 
\begin{subequations}
\begin{align}
&\qquad\qquad\qquad
\socp_M(\bomega)
\nonumber
\\
\inf_{\substack{u\in\U\\t\in\R}}
\ \  
&\frac{1}{M}\sum_{i=1}^M
\int_0^T\ell(x_u^i(t),u(t))\dd t 
+
\varphi(x_u^i(T))
\label{eq:socp:cost}
\\
\ \, \text{s.t.}
\quad
&t+
\frac{1}{\alpha M}\sum_{i=1}^M
\max\Big(
\sup_{s\in[0,T]}
G(x_u^i(s),\xi^i)-t,
0\Big)
\leq 0,
\label{eq:socp:avar}
\\[-4mm]
&
-\delta_M\leq
\frac{1}{M}\sum_{i=1}^M
H(x_u^i(T))
\leq\delta_M,
\label{eq:socp:end}
\\
&x_u\text{ satisfies }\sde, %
\label{eq:socp:sde}
\end{align}
\end{subequations}
where $\left(x_u^i,\xi^i\right)=\left(x_u(\omega^i),\xi(\omega^i)\right)$ for $i=1,\dots,M$,  
\eqref{eq:socp:end} corresponds to  $n_h$ (pointwise) inequality constraints, and 
$\delta_M>0$ is a padding constant that decreases as the number of samples $M$ increases (in practice, we set $\delta_M$ to a small value, see Theorem \ref{thm:optimality}). This approximation is inspired from the sample average approximation (SAA) method \cite{Pagnoncelli2009,Shapiro2014}, which was recently extended to problems with equality constraints that often appear in robotics applications \cite{LewBonalliEtAl2022}. To the best of our knowledge, this approximation has not been applied to continuous-time problems taking the form of \ocp.   

Given $M$ samples $\omega^i\in\Omega$ of the uncertainty, which define $M$ sample paths of the Brownian motion $W(\omega^i)$, of the uncertain parameters $\xi(\omega^i)$, and of the initial conditions $x_0(\omega^i)$, the approximation $\socp_M(\bomega)$ is a tractable deterministic trajectory optimization problem, %
see Section \ref{sec:numerical}. In the next section, we study the theoretical properties of this approach.

\section{Theoretical analysis}\label{sec:theory}
Depending on the samples $\omega^i$, the computed control trajectories $u_M(\bomega)$ that solve $\socp_M(\bomega)$ will be different.  What can we say about the quality of these solutions? We provide analysis that relies on the following mild assumptions.
\begin{description}%
\item[\namedlabel{A1}{(A1)}] 
The drift and diffusion coefficients $b$ and $\sigma$ are continuous. Further, there is a bounded constant $K\geq 0$ such that for all $x,y\in\R^n$, all $u,v\in U$, and all values $\xi\in\R^q$, 
$\|b(x,u,\xi)-b(y,v,\xi)\|
+
\|\sigma(x,u,\xi)-\sigma(y,v,\xi)\|
 \leq 
K(\|x-y\|+\|u-v\|)$. 
\item[\namedlabel{A2}{(A2)}]
The cost and constraints functions $(\ell,\varphi,G,H)$ are continuous. Further. there is a bounded constant $L\geq 0$ such that for all $x,y\in\R^n$, all $u,v\in U$, and all values $\xi\in\R^q$,

\vspace{-4mm}

{\small
\begin{align*}
&\|G(x,\xi)-G(y,\xi)\|+\|H(x)-H(y)\|\leq L\|x-y\|,
\\
&\|G(x,\xi)\|+\|H(x)\|\leq L,
\\
&\|\ell(x,u)-\ell(y,v)\|
+
\|\varphi(x)-\varphi(y)\|
\leq
L(\|x{-}y\|+\|u{-}v\|).
\end{align*}
}%
\item[\namedlabel{A3}{(A3)}] The control space $\U\,{\subset}\,L^2([0,T],U)$ can be identified with a compact subset of $\R^z$ for some $z\in\N$. 
\item[\namedlabel{A4}{(A4)}] 
The uncertain initial state $x_0$ and parameters $\xi$ are $\F_0$-measurable and square-integrable.
\end{description}
\A{1} is standard and guarantees the existence and uniqueness of solutions to \sde. 
\A{2} corresponds to standard smoothness assumptions of the cost and constraints functions. The constraints functions $G$ and $H$ can always be composed with a smooth cut-off function whose support contains the statespace of interest to ensure the satisfaction of the boundedness condition in \A{2}. \A{3} states that the control space $\U$ is finite-dimensional, which is an assumption that is satisfied in practical applications once a numerical resolution scheme is selected. In particular, \A{3} holds for the space of stepwise-constant control inputs $\U=\eqref{eq:control_space}$ that we use in this work. 
\A{4} makes rigorous the interpretation of $x_0$ and  $\xi$ as sources of epistemic uncertainty: \A{4} states that the uncertain initial state $x_0$ and parameters $\xi$ are randomized at the beginning of the episode and are independent of the Brownian motion $W$.

To describe the distance between solutions to $\socp_M$ %
and %
to %
\ocp, given non-empty compact sets $A,B\subseteq\U$, we define
\begin{equation}\label{eq:bD}
\bD(A,B)=\sup_{u\in A}
\Inf_{v\in B}
\|u-v\|.
\end{equation}
As defined above, $\bD$ satisfies the property that $A\subseteq B$ if $\bD(A,B)=0$. Thus, if we show that the solution sets $S_M(\bomega)$ and $S$ of $\socp_M(\bomega)$ of \ocp satisfy  $\bD(S_M(\bomega),S)=0$, then we can conclude that $S_M(\bomega)\subseteq S$, i.e., any optimal solution of $\socp_M(\bomega)$ is an optimal solution of the original problem \ocp. 
Theorem \ref{thm:optimality} below states that this result holds with probability one %
in the limit as the sample size $M$ increases. 
\begin{theorem}[Asymptotic Optimality]\label{thm:optimality}
Given $M\in\N$ samples and any constants $C>0$ and $\epsilon\in(0,\frac{1}{2})$, define  $\delta_M=CM^{(\epsilon-\frac{1}{2})}$, and 
denote the sets of optimal solutions to $\ocp$ and $\socp_M(\bomega)$ by $S$ and $S_M(\bomega)$, respectively. 
Then, under assumptions \A{1}-\A{4}, $\bProb$-almost-surely, 
$
\lim_{M\to\infty}
\bD(S_M(\bomega),S)=0.
$
\end{theorem}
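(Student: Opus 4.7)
The plan is to cast Theorem \ref{thm:optimality} as a consistency result for sample average approximation on a compact finite-dimensional decision set, in the spirit of \cite{Shapiro2014,LewBonalliEtAl2022}. The argument has three pieces: (i) compactness of an augmented feasible set, (ii) uniform strong convergence of the sample-averaged objective and constraint functionals to their population expectations, and (iii) a feasibility-recovery step that aligns the padding rate $\delta_M = CM^{\epsilon - \frac{1}{2}}$ with the speed of the uniform law of large numbers.

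First, I would introduce the AVaR auxiliary variable $t$ explicitly and treat $(u,t)$ as the joint decision. By \A{3}, $\U$ embeds in a compact subset of $\R^z$; by \A{2}, $|G|\le L$, so the infimum in \eqref{eq:avar} is attained on $[-L,L]$. Hence \ocp and $\socp_M(\bomega)$ can be restricted to a common compact set $\K\subset\R^{z+1}$. Second, on $\K$ I would prove uniform almost-sure convergence of the empirical objective $J_M(u)$, the empirical AVaR functional $g_M(u,t)=t+\frac{1}{\alpha M}\sum_i \max(\sup_{s\in[0,T]}G(x_u^i(s),\xi^i)-t,\,0)$, and the empirical terminal map $h_M(u)=\frac{1}{M}\sum_i H(x_u^i(T))$ to their expectations. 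The workhorse is the standard stability estimate for \sde under \A{1}, $\E\sup_{s}\|x_u(s)-x_v(s)\|^2\le C_T\|u-v\|_{L^2}^2$; combined with the Lipschitz and boundedness properties in \A{2} and the square-integrability in \A{4}, this makes each integrand Lipschitz in $(u,t)$ with an integrable random Lipschitz constant. A pointwise SLLN on a finite $\eta$-net of $\K$, together with this equicontinuity, yields $\sup_{\K}$-convergence $\bProb$-a.s., and a sub-Gaussian (Hoeffding) concentration with a union bound over a polynomial cover refines this to the explicit rate $\sup_{\K}\|h_M-h\|=O(M^{-1/2}\sqrt{\log M})$ almost surely.

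Third, since $\epsilon>0$, the padding $\delta_M=CM^{\epsilon-\frac{1}{2}}$ asymptotically dominates this sampling error. Hence $\bProb$-a.s.\ for all sufficiently large $M$, every $u$ feasible for \ocp satisfies $\|h_M(u)\|\le\delta_M$ and $g_M(u,t^\star)\le 0$ at the corresponding optimal auxiliary $t^\star$, so it is feasible for $\socp_M(\bomega)$; conversely, any $u_M\in S_M(\bomega)$ satisfies $\|h(u_M)\|\le\delta_M+o(\delta_M)\to 0$ and $g(u_M,t_M)\le 0$, so any cluster point of $\{u_M\}$ is feasible for \ocp. Combined with the uniform convergence of the objective and the compactness of $\K$, a classical consistency argument (\cite[Thm.~5.5]{Shapiro2014}) then delivers $\bD(S_M(\bomega),S)\to 0$ $\bProb$-a.s.

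The chief technical obstacle lies in step two, at the path functional $u\mapsto\sup_{s\in[0,T]}G(x_u(s),\xi)$. Unlike a classical SAA, each summand depends on an entire Brownian path $W(\omega^i)$ and on the random parameter $\xi(\omega^i)$, so it is not the empirical mean of a fixed real-valued function of $u$; one must first show that the pathwise supremum inherits the $u$-Lipschitz dependence of $x_u$ \emph{uniformly} in the sample, with an integrable Lipschitz constant, before compactness of $\K$ and a standard covering argument can close the gap. The nonsmooth $\max(\cdot,0)$ inside the AVaR is benign (it is $1$-Lipschitz), but verifying the measurability and integrability of this path-supremum uniformly in $u$ is the delicate quantitative ingredient on which the asymptotic conclusion ultimately rests.
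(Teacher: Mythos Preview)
Your overall architecture matches the paper's: augment by the AV@R parameter $t$, compactify to a subset of $\R^{z+1}$, establish uniform almost-sure convergence of the sampled functionals, and then invoke a consistency result from \cite{LewBonalliEtAl2022} (the paper cites Theorem~5.1 there; you invoke \cite[Thm.~5.5]{Shapiro2014} together with the padding argument, which is the same mechanism). So strategically you are on the right track.

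The gap is exactly where you place it, but your proposed tool does not close it. The mean-square stability estimate $\E\sup_s\|x_u(s)-x_v(s)\|^2\le C_T\|u-v\|_{L^2}^2$ is a statement about continuity of $u\mapsto x_u$ in $L^2(\Omega)$; it does \emph{not} yield, for fixed $\omega$, a pathwise Lipschitz bound with an integrable random constant. Your $\eta$-net argument for the uniform SLLN needs pathwise equicontinuity sample-by-sample, and that is not what the $L^2$ estimate delivers. The paper bridges this by invoking Kolmogorov's continuity criterion: from higher-moment estimates $\E\sup_s\|x_u(s)-x_v(s)\|^p\le C_{p,T}\|u-v\|^p$ (valid for all $p\ge 2$ under \A{1}) one obtains, up to a modification, that $u\mapsto x_u$ is $\tilde\alpha$-H\"older in the uniform norm $\Prob$-almost surely for any $\tilde\alpha\in(0,1)$, with a random H\"older constant having finite moments. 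Composing with the $1$-Lipschitz maps $\max(\cdot,0)$ and $G$, $H$ from \A{2} then gives the required pathwise H\"older regularity of the integrands in $(u,t)$. Note you get H\"older, not Lipschitz, which is still sufficient for the covering argument; the claim of pathwise Lipschitz is too strong and cannot be obtained this way in dimension $z\ge 2$.
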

The proof of Theorem \ref{thm:optimality} follows from recent results in \cite{LewBonalliEtAl2022}, see the appendix. %
Theorem \ref{thm:optimality} gives convergence guarantees to optimal solutions to \textbf{OCP} (in particular, the \avar constraint \eqref{eq:ocp:avar} is satisfied) as the sample size increases%
, justifying the proposed approach. 
This asymptotic optimality result \rev{is derived in continuous-time and is thus independent} of the \rev{chosen} discretization scheme for numerical resolution. %
This contrasts with \rev{methods} that start with a discrete-time formulation and \rev{may} yield different results for different discretizations \cite{JansonSchmerlingEtAl2015b,FreyRSS2020}. %

The following result gives high-probability \textit{finite-sample} error bounds for the average-value-at-risk constraint given a solution  %
of the sample-based approximation. %
The proof (see the appendix) relies on concentration inequalities  \cite{Bartlett2003,Koltchinskii2006,wainwright_2019,LewBonalliEtAl2022}. 

\begin{lemma}[Finite-Sample Error Bound]\label{lem:finite_sample_socp} 
Let $\epsilon>0$, $\beta\in(0,1)$, and $M\in\N$ be such than $M\geq\epsilon^{-2}(\tilde{C}+\bar{h}(2\log(1/\beta))^{\frac{1}{2}})^2$ for some finite constants $(\tilde{C},\bar{h})$ large-enough. 
Denote any solution to $\socp_M(\bomega)$ by $u_M(\bomega)$. 
Then, under assumptions \A{1}-\A{4},  
\begin{equation}
\avar_\alpha\left(\sup_{s\in[0,T]}
G(x_{u_M(\bar\omega)}(s),\xi)
\right)
\leq 
\epsilon
\end{equation}
with $\bProb$-probability at least $(1-\beta)$ over the $M$ iid samples $\omega^i$.
\end{lemma}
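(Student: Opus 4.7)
The plan is to exploit the infimum definition of \avar in \eqref{eq:avar} to relate the true risk of the SAA solution to its empirical counterpart---which is $\leq 0$ by feasibility---plus a uniform empirical-process deviation term. Writing $Z_u=\sup_{s\in[0,T]}G(x_u(s),\xi)$ and $Z_u^i=\sup_sG(x_u^i(s),\xi^i)$, feasibility of $u_M(\bomega)$ in $\socp_M(\bomega)$ yields an auxiliary variable $t_M\in\R$ with
\[
t_M+\frac{1}{\alpha M}\sum_{i=1}^M\max\bigl(Z_{u_M(\bomega)}^{\,i}-t_M,\,0\bigr)\leq 0.
\]
Evaluating \eqref{eq:avar} at $t=t_M$ and subtracting,
\[
\avar_\alpha(Z_{u_M(\bomega)})\leq\frac{1}{\alpha}\Big(\E[\max(Z_{u_M(\bomega)}-t_M,0)]-\frac{1}{M}\sum_{i=1}^M\max(Z_{u_M(\bomega)}^{\,i}-t_M,0)\Big),
\]
so it suffices to control, uniformly over $(u,t)$, the discrepancy between $\E[\max(Z_u-t,0)]$ and its empirical mean. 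Since $|G|\leq L$ by \A{2}, the infimum in \eqref{eq:avar} is attained on $[-L,L]$, so $t$ may be restricted to this compact interval, yielding a function class $\{\omega\mapsto\max(Z_u(\omega)-t,0):(u,t)\in\U\times[-L,L]\}$ that is uniformly bounded by the envelope $2L$.

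The second step invokes standard empirical-process tools. McDiarmid's bounded-differences inequality controls the supremum of the deviation around its mean at scale $\bar h\sqrt{2\log(1/\beta)/M}$, with $\bar h$ absorbing the envelope $2L$ and the factor $1/\alpha$. Symmetrization bounds the expectation of the supremum by the Rademacher complexity of the function class. Adding the two contributions gives the deviation bound $\tilde C/\sqrt M+\bar h\sqrt{2\log(1/\beta)/M}$, which on rearrangement is precisely the sample-size threshold in the statement.

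The technical crux---and the main obstacle---is bounding the Rademacher complexity by $\tilde C/\sqrt M$. By \A{3}, $\U$ embeds in a compact subset of $\R^z$ and $t$ lies in $[-L,L]$, so it suffices to furnish a Lipschitz modulus for $(u,t)\mapsto\max(Z_u(\omega)-t,0)$ in an $L^2(\bProb)$-type norm. The truncation is $1$-Lipschitz in both arguments, so the core estimate reduces to $\|Z_u-Z_v\|_{L^2(\bProb)}\leq C\|u-v\|$. This follows from the Lipschitz dependence of the SDE flow on controls---a standard Gronwall/BDG argument under \A{1}, \A{4} giving $\E[\sup_s\|x_u(s)-x_v(s)\|^2]^{1/2}\leq C'\|u-v\|$---composed with the Lipschitz-in-$x$ function $G$ from \A{2}, using that $\sup_s$ is $1$-Lipschitz pointwise in $\omega$. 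Combined with a volumetric cover of $\U\times[-L,L]$ from \A{3}, Dudley's entropy integral---as already employed in \cite{LewBonalliEtAl2022}---gives $\mathrm{Rad}_M\leq\tilde C/\sqrt M$, closing the argument. The delicate point is that the flow stability bound must hold uniformly in the remaining randomness $\xi$, and must survive the non-smooth compositions with $\sup_s$ and $\max(\cdot,0)$; both survive because the relevant Lipschitz constants are absolute and $\xi$-independent by \A{1} and \A{2}.
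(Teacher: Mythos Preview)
Your proposal is correct and follows essentially the same route as the paper: evaluate the variational \avar formula \eqref{eq:avar} at the empirical minimizer $t_M$, use feasibility of $(u_M,t_M)$ in $\socp_M(\bomega)$ to bound the empirical term by zero, and then invoke a uniform concentration bound over the compact parameter space $(u,t)\in\U\times\mathcal{T}$. The paper packages the concentration step as a black-box lemma from \cite{LewBonalliEtAl2022} (stated here as Lemma~\ref{lem:concent:unif_bounded}), applied to the function $g((t,u),\omega)=t+\alpha^{-1}\max(Z_u(\omega)-t,0)$, whereas you unfold that lemma's proof via McDiarmid, symmetrization, and Dudley's entropy integral.

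The one technical difference worth noting is the regularity route: the paper obtains \emph{pathwise} H\"older continuity of $u\mapsto x_u$ (with square-integrable random H\"older constant) via Kolmogorov's lemma, which is what assumption \Bassum{1} of Lemma~\ref{lem:concent:unif_bounded} requires; you instead work directly with the $L^2(\Prob)$-Lipschitz bound $\E[\sup_s\|x_u(s)-x_v(s)\|^2]^{1/2}\leq C'\|u-v\|$ from Gronwall/BDG and chain in that metric. Both are valid---your route is arguably more direct since it avoids the Kolmogorov step---but be aware that Dudley's bound on the empirical Rademacher complexity is most cleanly stated for pathwise-Lipschitz classes, so if you keep the $L^2$ route you should make explicit how the $L^2$ increment bound controls the empirical-measure covering numbers (e.g., via the random Lipschitz constant implied by BDG at higher moments, or by passing through the a.s.\ H\"older version as the paper does).
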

By Lemma \ref{lem:finite_sample_socp}, replacing constraint \eqref{eq:socp:avar} in $\socp_M(\bar\omega)$ with 
$$t+\frac{1}{\alpha M}\sum_{i=1}^M
\max\Big(
\sup_{s\in[0,T]}
G(x_u^i(s),\xi^i)-t,
0\Big)
\leq -\epsilon
$$
would guarantee the satisfaction of the average-value-at-risk constraint in \textbf{OCP} if the sample size $M$ is large-enough. \rev{T}he error $\epsilon$ can be made arbitrarily small (with increasingly high probability $1-\beta$) by increasing the sample size $M$. 
In practice, the bound in Lemma \ref{lem:finite_sample_socp} is conservative, and numerical results demonstrate that a few samples are sufficient to obtain high-quality solutions to challenging problems, see Section \ref{sec:results}.  

\section{Numerical resolution}\label{sec:numerical}
Theorem \ref{thm:optimality} justifies approximating \ocp using samples $\omega^i$ and searching for solutions to the deterministic relaxation $\socp_M(\bomega)$ instead. In this section, we describe a numerical method for efficiently computing solutions to  $\socp_M(\bomega)$.

\textbf{Control space parameterization}: 
We optimize over open-loop controls $u$ parameterized by $S\in\N$ stepwise-constant inputs $u_s$ of duration $\Delta t=T/S$, described by the set 
\begin{align}\label{eq:control_space}
\U=\left\{u: 
\begin{array}{l}
u(t)=\sum_{s=0}^{S-1} \rev{u}_s\mathbf{1}_{\left[s\Delta t,(s+1)\Delta t\right)}(t),
\\
(u_0,\dots,u_{S-1})\in U\times\dots\times U
\end{array}
\right\}.
\end{align}
This set $\U$ clearly satisfies \A{3}, since it can be identified with a compact set of $\R^{Sm}$.  Note that any square-integrable open-loop control %
$u$ %
can be approximated arbitrarily well by some $u\in\U$ by increasing $S$, so this class of function is expressive.

Alternatively, one could also optimize over certain classes of closed-loop controllers (e.g., controls of the form $u=\bar{u}+Kx$);
an approach that is common in the literature %
\cite{Oldewurtel2008,Goulart2006}. 

\textbf{Finite-dimensional approximation}: 
Numerically solving general instances of $\socp_M(\bomega)$ requires discretizing the problem. In this work, we discretize $\socp_M(\bomega)$ as follows: %
$$
\socphat_M(\bomega)
$$
{\small
\\[-12mm]
\begin{subequations}
\begin{align}
\hspace{-2mm}
\inf_{\substack{u\in\U\\t\in\R}}
\  
&\frac{1}{M}\sum_{i=1}^M
\left(
\sum_{k=0}^{S-1}
\ell(x_u^i(k\Delta t),u(k\Delta t))\Delta t+
\varphi(x_u^i(S\Delta t))
\right)
\label{eq:socp_dt:cost}
\\[-1mm]
\hspace{-2mm}\text{s.t.}
\ \,  
&t{+}
\frac{1}{\alpha M}\sum_{i=1}^M
\max\Big(
\max_{k=0,\dots,S}
G(x_u^i(k\Delta t),\xi^i)\,{-}\,t,
0\Big)
\leq 0,
\hspace{-2mm}
\label{eq:socp_dt:avar}
\\[-2mm]
&
-\delta_M\leq
\frac{1}{M}\sum_{i=1}^M
H(x_u^i(S\Delta t))
\leq\delta_M,
\label{eq:socp_dt:end}
\\[-1mm]
&
x_u^i((k+1)\Delta t)=x_u^i(k\Delta t)+b(x_u^i(k\Delta t),u(k\Delta t),\xi^i)\Delta t+
\nonumber
\\[-1mm]
&\hspace{10mm}
+\sigma(x_u^i(k\Delta t),u(k\Delta t),\xi^i)(W_{(k+1)\Delta t}^i-W_{k\Delta t}^i)
\hspace{-1mm}
\label{eq:socp_dt:sde}
\\[-1mm]
&\hspace{18.5mm}
\forall k=0,\dots,S-1, 
\ i=1,\dots,M,
\nonumber
\\[-1mm]
&x_u^i(0)=x_0^i
\ \ \ \forall i=1,\dots,M,
\label{eq:socp_dt:x0}
\end{align}
\end{subequations}
}%
where $(x_0^i,\xi^i,W^i)=(x_0(\omega^i),\xi(\omega^i),W(\omega^i))$ denote the $M$ samples of the initial conditions, parameters, and sample paths of the Brownian motion, and $x_u^i(k\Delta t)=x_u(k\Delta t, \omega^i)$ denotes associated sample paths of the state trajectory, for conciseness.  
The constraint \eqref{eq:socp_dt:sde} corresponds to an Euler-Maruyama discretization of \eqref{eq:SDE}. 
This discretization of $\socp$ allows for a simple and efficient implementation, although more accurate integration schemes could be used to formulate $\socphat_M(\bomega)$. 

The SDE constraint \eqref{eq:socp_dt:sde} can be either explicitly enforced by optimizing over both state variables $x_u^i(k\Delta t)$ and control variables $u(k\Delta t)$ and enforcing \eqref{eq:socp_dt:sde}, or implicitely by only optimizing over the control variables $u(k\Delta t)$ that parameterize the state trajectory via \eqref{eq:socp_dt:sde}. In this work, we opt for the latter as it reduces the number of variables, albeit at a potential reduction in numerical stability. 
As shown in \cite{DyroHarrisonEtAl2021}, %
the alternative option of parameterizing both state particles and control variables could also be computationally efficient.

By introducing $M$ additional variables $y_i\in\R$, the inequality constraints  \eqref{eq:socp_dt:avar} are equivalent to the set of the constraints 
\begin{gather}
\begin{cases}
\hspace{3mm}(M\alpha)t+\sum_{i=1}^My_i
\leq 0
\\[1mm]
\hspace{27.6mm}0\leq y_i 
&\forall i=1,\dots,M
\\
G_j(x_u^i(k\Delta t),\xi^i)-t
\leq y_i
&
{\small\begin{cases}
\forall i=1,\dots,M
\\[-1mm]
\forall j=1,\dots,N
\\[-1mm]
\forall k=0,\dots,S.
\end{cases}}%
\end{cases}
\label{eq:socp_dt:avar_ref}
\end{gather}
Note that these constraints are smooth if every $G_j$ is smooth.

\textbf{Numerical resolution}: With this reformulation, %
$\socphat$ %
can be solved using reliable off-the-shelf optimization tools such as %
\ipopt \cite{ipopt2006}. 
In this work, we leverage  sequential convex programming (SCP). SCP consists of solving a sequence of convex approximations of $\socphat$ until convergence. %
The main appeal in using SCP is that it typically only requires a few convex approximations of the original non-convex program to reach accurate solutions. 
Since the set of constraints 
\eqref{eq:socp_dt:avar_ref} is sparse, the convex approximations of $\socphat$ can be efficiently solved. %
In contrast, interior-point-methods for non-convex programming may take a larger number of small steps that each require evaluating gradients and hessians of the original program.  
Since this gradient-hessian evaluation is potentially the computational bottleneck in solving $\socphat$, %
SCP is a promising solution scheme for this class of problems. 
See  
\cite{LewBonalliEtAl2020,BonalliLewESAIM2022,BonalliLewEtAl2022} for further details on SCP for trajectory optimization.

\begin{remark}[\textbf{\avar vs chance constraints}]\label{remark:avar_easy_optimize_vs_ccs}
Beyond the ability to account for tail events, 
the smoothness of the reformulation in \eqref{eq:socp_dt:avar_ref} motivates enforcing \avar constraints instead of chance constraints that are often used in the literature. %
Indeed, a typical chance constraint $\Prob(\sup_tG(x_u(t),\xi)>0)\leq\alpha$ is equivalently written as 
$\E[\mathbf{1}_{(0,\infty)}(\sup_tG(x_u(t),\xi))]\leq\alpha$, where $\mathbf{1}_{(0,\infty)}(z)=1$ if $z>0$ and $0$ otherwise. 
This constraint only takes a simple form in particular cases \cite{Calafiore2006}. %
Approximating this chance constraint from samples would give
$$\frac{1}{M}\sum_{i=1}^M
\mathbf{1}_{(0,\infty)}\Big(\sup_tG(x_u^i(t),\xi^i)\Big)
\leq\alpha,$$
which is not smooth since $\mathbf{1}_{(0,\infty)}(\cdot)$ is a step function. To solve the resulting problem with gradient-based methods, one would need to formulate and solve smooth approximations instead \cite{Geletu2017,PeaOrdieres2020}, or solve the problem multiple times with different constraints paddings  \cite{JansonIchterEtAl2015b}, which is computationally expensive. 
In contrast, the reformulation in \eqref{eq:socp_dt:avar_ref} is smooth and exact (up to errors from the sample-based and discrete-time approximations), which allows efficient numerical resolution. 
\end{remark}

\textbf{Example \ref{example:obs_avoid} (obstacle avoidance constraints).} 
\textit{With $M$ samples of the state trajectory $x_u^i$ and of the obstacles $\Obs_j(\xi^i)$, 
the \avar constraint in \eqref{eq:avar:obs_avoid} can be approximated with 
the set of constraints \eqref{eq:socp_dt:avar_ref} with $G_j(x_u^i(k\Delta t),\xi^i)=-d_{\Obs_j(\xi^i)}(x_u^i(k\Delta t))$. 
This formulation applies to general obstacle representations and can be specialized to particular obstacle shapes. 
For example, if the $N$ uncertain obstacles $\Obs_j$ are spheres of uncertain radii $r_j$ and centers $o_j$, then the last term in \eqref{eq:socp_dt:avar_ref} %
is %
\begin{align}
r_j^i-\|x_u^i(k\Delta t)-o_j^i\|-t
\leq y_i
\end{align}
for all  
$i,j,k$, where $(r_j^i,o_j^i)=(r_j,o_j)(\omega^i)$ %
are $M$ iid samples. 
If the obstacles are ellipsoidal as in \eqref{eq:sdf:ellipsoid}, then \eqref{eq:socp_dt:avar_ref} %
 becomes 
\begin{equation}
\rev{1-(x_u^i(k\Delta t)-o_j^i)^\top Q_j^i (x_u^i(k\Delta t)-o_j^j)}-t
\leq y_i
\end{equation}
for all  
$i,j,k$.  This gives a set of differentiable constraints that can be passed to a non-convex optimization algorithm. 
}

\textit{We note that potential corner-cutting due to the time discretization of the \avar constraint is easily addressed via different methods, e.g., 
with the approach in \cite{daSilvaArantes2019}, 
by enforcing \eqref{eq:socp_dt:avar} on a finer grid, or by padding obstacles. }

\section{Applications and results}\label{sec:results}
We apply the proposed approach to three challenging planning problems with diverse sources of uncertainty. 
Code is available at \scalebox{0.965}{\url{https://github.com/StanfordASL/RiskAverseTrajOpt}}.

\textbf{1) Drone planning with uncertain obstacles}. %
The state is  $x=(p,\dot{p})\in\R^6$, 
the input is %
$u\in\R^3$. Dynamics are given by%
\begin{equation}
\small
b(x,u,\omega)=
\AverageSmallMatrix{
\dot{p}\\0
}
{+}
\frac{1}{m(\omega)}
\Big(
\rev{-}\beta_\text{drag}\AverageSmallMatrix{
0
\\
|\dot{p}|\dot{p}
}
{+}
\AverageSmallMatrix{
0_{3\times3}\\I_{3\times3}
}
(u+Kx)
\Big),
\end{equation}
and $\sigma(x,u,\omega)=
\frac{1}{m(\omega)}
(0_{3{\times}3},\beta_\sigma I_{3{\times}3})^\top$. 
The drone transports an uncertain payload, modeled by assuming that the total mass $m$ of the system follows a uniform distribution. 
$(\beta_\text{drag},\beta_\sigma)$ %
are drag and diffusion coefficients and $K$ %
is a feedback gain.  
We consider three uncertain ellipsoidal obstacles whose shape matrices %
have uncertain axes distributed according to a uniform distribution, %
and enforce collision avoidance constraints as described in Example \ref{example:obs_avoid}. 
The objective is reaching a goal ($H(x(T))=x(T)-x_\text{g}$) while minimizing control effort $\ell(x,u)=u^\top Ru$ and 
satisfying collision avoidance constraints at risk level $\alpha$ as in \eqref{eq:avar:obs_avoid}. 
This nonlinear system has sources of aleatoric (disturbances modeled as a Wiener process) and epistemic (mass and obstacles) uncertainty, which makes solving the problem with classical approaches challenging.

We formulate the sample-based approximation with $S=20$ nodes and solve it with SCP.    
We present an example of results in Figure \ref{fig:main_figure_drone} ($M=50$, $\alpha=10\%$) with the trajectory samples $x_u^i$ obtained at convergence. We report the associated Monte-Carlo histogram of the negative minimum distance over $t\in[0,T]$ in \eqref{eq:example1:G_obs_max} with the mean, $\var_\alpha$ and $\avar_\alpha$ minimum values of \eqref{eq:example1:G_obs_max}. The average-value-at-risk of collision is below $0$, indicating that the solution of \socp is feasible for \ocp.

\rev{\textit{Sensitivity to risk parameter:}} We run the approach \rev{($\textrm{SAA}_\alpha$)} %
for different values of %
$\alpha$ each %
and report %
\rev{results} in Table \ref{table:montecarlo:drone_drving}. 
The $\avar_\alpha$ constraint is \rev{approximately} satisfied  ($\avar_\alpha\,\rev{\approx 0}$) for each value of $\alpha$. As a result, \rev{the} associated joint chance constraints \rev{for collision avoidance} are satisfied, since the percentage of constraints violations is always below $\alpha$, validating the discussion in Remark \ref{remark:ccs_vs_avar}. 
Safer behavior is obtained for smaller values of $\alpha$, effectively  balancing the tradeoff between the efficiency and the risk of constraints violations. 

\rev{\textit{Comparisons:}} \rev{The proposed approach ($\textrm{SAA}_\alpha$) yields trajectories that closely match desired safety risk levels. In contrast}, a baseline  that \rev{neglects} uncertainty \rev{(Deterministic)} often violates constraints. \rev{We also compare with a method (Gaussian${}_\alpha$) that %
bounds the total probability of constraints violation below $\alpha$ 
using Boole's inequality. This baseline uses an approximate Gaussian-distributed state representation \citep{LewBonalliEtAl2020} and optimizes over the risk allocation, see the appendix for details. Due to the approximate uncertainty representation and the use of Boole's inequality (thus neglecting time and space correlations of uncertainty), this baseline is overly conservative.} %

 \begin{wrapfigure}{R}{0.4\linewidth}
 \begin{minipage}{0.99\linewidth}
\includegraphics[width=1\linewidth]{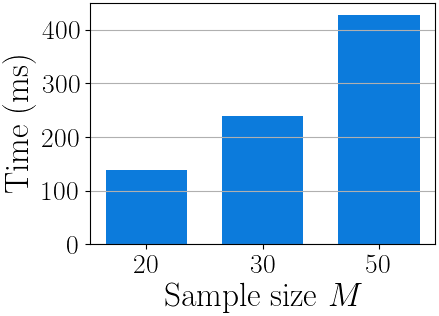}\caption{Drone planning total computation times.}
\label{fig:drone:comp_times_different_M}
  \vspace{-5mm}
\end{minipage}
\end{wrapfigure}

\rev{\textit{Computation time:}} In Figure \ref{fig:drone:comp_times_different_M}, we report total computation times for different sample sizes $M\in\{20,30,50\}$  (we report the median over $30$ runs with $\alpha=5\%$, evaluated on a laptop with an i7-10710U CPU (1.10 GHz) and 16 GB of RAM). We use a zero initial guess $\bar{u}_s=0$ and stop after $10$ SCP iterations, which is sufficient to obtain a final SCP iteration error $\|u^k-u^{k-1}\|/\|u^k\|\leq 1\%$. Albeit our implementation is written in Python (with \textrm{JAX} \cite{jax2018github} and \osqp \cite{Stellato2020}), computation times are reasonable and amenable to real-time applications. 
Computation time scales roughly linearly in the sample size. Parallelization on a GPU could enable using a larger sample size $M$ while retaining speed, albeit results show that using a small sample size suffices to obtain feasible solutions.

\textbf{2) Autonomous driving with a pedestrian}.  
The state is 
$x=(x_\text{ego}, x_\text{ped})$ with $x_\text{ego}=(p^x_\text{ego},p^y_\text{ego},v_\text{ego},\phi_\text{ego})$ the ego-vehicle and $x_\text{ped}=(p^x_\text{ped},p^y_\text{ped},v^x_\text{ped},v^y_\text{ped})$ a pedestrian, and $u=(a,\tau)$ the ego control input. The coupled system follows %
$\dd x_\text{ego}=(v_\text{ego}\cos(\phi_\text{ego}),v_\text{ego}\sin(\phi_\text{ego}),u)\dt$, 
$\dd p_\text{ped}
=
v_\text{ped}\dt$, and $
\dd v_\text{ped}
=
f(x,\omega)
\dt
+
\sigma
\dd W_t$,  
from $(x_{\text{ego},0},x_{\text{ped},0})$ with known $x_{\text{ego},0}$ but Gaussian-distributed $x_{\text{ped},0}$  due to uncertainty from perception. The term $f(x,\omega)$ represents interaction forces inspired from the Social Force Model \cite{Helbing1995}
\begin{equation}
\small
f(x(t),\omega)=
\omega_1(v_\text{ped}^\text{des}-v_\text{ped}(t))e_{\text{ped}}
+
\omega_2\frac{p_\text{ego}(t)-p_\text{ped}(t)}{\|p_\text{ego}(t)-p_\text{ped}(t)\|},
\end{equation}
where $(\omega_1,\omega_2)$ represents the tendency of a pedestrian to maintain a desired speed $v_\text{ped}^\text{des}$ in the direction $e_{\text{ped}}$ and actively avoid the car. A pedestrian with large values of $\omega_1$ and small values of $\omega_2$ tends to keep the same speed while neglecting the car. $(\omega_1,\omega_2)$ represents a source of epistemic uncertainty: the personality of the pedestrian is randomized and does not change during the planning episode. Such interactions could be learned from data \cite{SalzmannIvanovicEtAl2020,IvanovicElhafsiEtAl2020} and incorporated into our method. 

The objective is reaching a destination $p_\text{ego}^\text{des}$ ($H(x(T))=p_\text{ego}(T)-p_\text{ego}^\text{des}$) while minimizing control effort $\ell(x,u)=u^\top Ru$ and 
maintaining a minimum separation distance $d_\text{sep}$%
\begin{equation}
\small
\avar_\alpha\Big(\sup_{t\in[0,T]}
d_\text{sep}-\|p_\text{ego}(t)-p_\text{ped}(t)\|
\Big)\leq 0.
\label{eq:driving:separation}
\end{equation} 
This constraint is reformulated as described in Example \ref{example:obs_avoid}. The resulting \ocp problem combines sources of aleatoric (from the Wiener process disturbances) and epistemic uncertainty (from the pedestrian's uncertain behavior and initial pose).

We discretize the problem with $S=20$ nodes %
and report results in Figure \ref{fig:results:driving} and Table \ref{table:montecarlo:drone_drving}. 
Reducing the risk parameter $\alpha$ yields safer trajectories that maintain a larger distance with the pedestrian at the expense of greater control effort.  For all values of $\alpha$, the \avar constraints and corresponding joint chance constraints are satisfied. In contrast, \rev{the joint-chance-constrained baseline with optimal risk allocation (Gaussian${}_\alpha$) is overly conservative, whereas }a baseline that \rev{neglects} uncertainty \rev{(Deterministic)} is unable to maintain a safe separation distance with the pedestrian at all times. 
Computation times are close to those for the drone planning problem in Figure \ref{fig:drone:comp_times_different_M} (see the appendix), again demonstrating the real-time capabilities of the \rev{proposed} approach.

\begin{figure}[!t]
\centering
\includegraphics[width=1\linewidth]{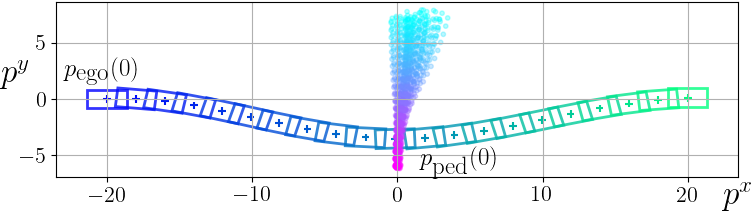}
\caption{Driving scenario results ($\alpha=5\%$).}
\label{fig:results:driving}
\vspace{-3mm}
\end{figure}

\begingroup
\def\arraystretch{1.1}
\setlength\tabcolsep{1.4mm}
\begin{table}[!t]
\caption{Monte-Carlo validation with $\rev{10^4}$ samples of disturbances and uncertain parameters. %
 We report the median (over $30$ runs with $M=50$) of constraints violation percentage,  risk, and  control effort.}
\label{table:montecarlo:drone_drving}
\centering\begin{tabular}{cc|c|c|c|c|}
\cline{1-6}
\multicolumn{2}{ |c|  }{\textbf{Drone planning}}
& $\alpha=5\%$ & $\alpha=10\%$ & $\alpha=20\%$ & $\alpha=30\%$
\\ 
\cline{1-6}
\multicolumn{1}{ |c  }{\multirow{3}{*}{$\textrm{SAA}_\alpha$} } &
\multicolumn{1}{ |c| }{\begin{tabular}{c}
constraints\\[-1mm] violations
\end{tabular}} & $\rev{2.1}\%$ & $\rev{5.6}\%$ & $\rev{8.4}\%$ & $\rev{11.7}\%$
\\
\cline{2-6}
\multicolumn{1}{ |c  }{}                        &
\multicolumn{1}{ |c| }{{\hspace{-1mm}$\avar_\alpha$\hspace{-4mm}}} & 
$\rev{0.01}$ & $\rev{0.04}$ & $\rev{0.02}$ & $\rev{0.00}$
\\
\cline{2-6}
\multicolumn{1}{ |c  }{}                        &
\multicolumn{1}{ |c| }{cost} & $\rev{76.6}$ & $\rev{54.0}$ & $\rev{48.3}$ & $\rev{46.0}$
\\
\cline{1-6}
\multicolumn{1}{ |c  }{\multirow{2}{*}{\rev{$\textrm{Gaussian}_\alpha$}}} &
\multicolumn{1}{ |c| }{\begin{tabular}{c}
\rev{constraints}\\[-1mm] \rev{violations}
\end{tabular}} & \rev{$1.3\%$} & \rev{$2.3\%$} & \rev{$3.9\%$} & \rev{$6.0\%$}
\\
\cline{2-6}
\multicolumn{1}{ |c  }{}                        &
\multicolumn{1}{ |c| }{\rev{cost}} & \rev{$69.9$} & \rev{$61.8$} & \rev{$54.5$} & \rev{$50.7$}
\\
\cline{1-6}
\multicolumn{1}{ |c  }{\multirow{2}{*}{\hspace{-1mm}\rev{Deterministic}\hspace{-1mm}} } &
\multicolumn{1}{ |c| }{
\begin{tabular}{c}constraints\\[-1mm] violations
\end{tabular}} & \multicolumn{4}{ c| }{$73.8\%$}	
\\
\cline{2-6}
\multicolumn{1}{ |c  }{}                        &
\multicolumn{1}{ |c| }{cost} & \multicolumn{4}{ c| }{$31.4$}
\\
\cline{1-6}
\\[-1mm]
\cline{1-6}
\multicolumn{2}{ |c|  }{\textbf{Autonomous driving}} & $\alpha=1\%$ & $\alpha=2\%$ & $\alpha=5\%$ & $\alpha=10\%$
\\ 
\cline{1-6}
\multicolumn{1}{ |c  }{\multirow{3}{*}{$\textrm{SAA}_\alpha$} } &
\multicolumn{1}{ |c| }{
\begin{tabular}{c}constraints\\[-1mm] violations
\end{tabular}
} & $0.2\%$ & $0.3\%$ & $0.8\%$ & $3.1\%$
\\
\cline{2-6}
\multicolumn{1}{ |c  }{}                        &
\multicolumn{1}{ |c| }{$\avar_\alpha$} & 
$-0.03$ & $-0.05$ & $-0.05$ & $-0.02$
\\
\cline{2-6}
\multicolumn{1}{ |c  }{}                        &
\multicolumn{1}{ |c| }{cost} & $56.3$ & $56.1$ & $53.5$ & $50.9$ 
\\
\cline{1-6}
\multicolumn{1}{ |c  }{\multirow{2}{*}{\rev{$\textrm{Gaussian}_\alpha$} } } &
\multicolumn{1}{ |c| }{\begin{tabular}{c}
\rev{constraints}\\[-1mm] \rev{violations}
\end{tabular}} & \rev{$0.0\%$} & \rev{$0.0\%$} & \rev{$0.3\%$} & \rev{$1.2\%$}
\\
\cline{2-6}
\multicolumn{1}{ |c  }{}                        &
\multicolumn{1}{ |c| }{\rev{cost}} & \rev{$60.2$} & \rev{$58.1$} & \rev{$55.0$} & \rev{$52.5$}
\\
\cline{1-6}
\multicolumn{1}{ |c  }{\multirow{2}{*}{\hspace{-1mm}\rev{Deterministic}\hspace{-1mm}} } &
\multicolumn{1}{ |c| }{\begin{tabular}{c}constraints\\[-1mm] violations
\end{tabular}} & \multicolumn{4}{ c| }{\rev{$53.1\%$}}	
\\
\cline{2-6}
\multicolumn{1}{ |c  }{}                        &
\multicolumn{1}{ |c| }{cost} & \multicolumn{4}{ c| }{\rev{$42.5$}}
\\
\cline{1-6}
\end{tabular}
\vspace{-5mm}
\end{table}
\endgroup

\textbf{3) Legged navigation over uncertain terrain}. Finally, we consider a legged robot whose goal is jumping as far as possible over uncertain terrain while limiting its risk of slippage. 
We consider the hopper robot in \cite{LeCleac2023} with $x=(q,\dot{q})$ where $q=(p_x, p_z, \theta, r)\in\R^4$, $u=(\tau,f)\in\R^2$, and dynamics%
\begin{equation}\label{eq:dynamics:hopper}
M\ddot{q}(t)+C=B(q(t))u(t)+J_\text{c}(q(t))^\top\lambda(t),
\
x(0)=x^0,
\end{equation}
where $\lambda=(\lambda_x,\lambda_z)\in\R^2$ are contact forces that must satisfy 
\begin{subequations}
\label{eq:hopper}
\begin{gather}
\lambda_x(t)=
\lambda_z(t)=0 \ 
\forall t\in\mathcal{T}_{\text{flight}},
\label{eq:hopper:flight}
\\
J_{\text{c},x}(q(t))^\top\dot{q}(t)
=0, 
\ \ 
\lambda_z(t)\geq 0 \ 
\forall t\in\mathcal{T}_{\text{contact}}, 
\label{eq:hopper:no_slip_qdot}
\\
\hspace{-1mm}\text{and }\  
\avar_\alpha\left(\sup_{t\in\mathcal{T}_{\text{contact}}}
\lambda_x(t) - \mu(q(t),\omega)\lambda_z(t)
\right)\leq 0.
\label{eq:hopper:noslip}
\end{gather}
\end{subequations}
These constraints encode the absence of contact force in the flight phase  \eqref{eq:hopper:flight}, the no-slip and positive normal force conditions \eqref{eq:hopper:no_slip_qdot}, and a constraint on the risk of slippage \eqref{eq:hopper:noslip}, over a pre-defined contact schedule $\mathcal{T}_{\text{flight}},\mathcal{T}_{\text{contact}}\subset[0,T]$. 

Characterizing terrain adhesion properties (i.e., terramechanics) is challenging and 
is often done via data-driven approaches. For instance,  \citet{Cunningham2017} fit Gaussian processes to observed data for slip prediction. Thus, we model the soil properties at the contact point $p_{\text{foot},x}(q)$ given by the system's kinematics with the Random Fourier Features \cite{RahimiRecht2007}
$
\mu(q,\omega)=\bar\mu + \sum_{n=1}^{30} \omega_{n,1}\cos(\omega_{n,2}\cdot  p_{\text{foot},x}(q)+\omega_{n,3})
$ for some 
 \begin{wrapfigure}{R}{0.38\linewidth}
 \begin{minipage}{0.99\linewidth}
   \vspace{-4mm}
 \centering
 \includegraphics[width=1\linewidth]{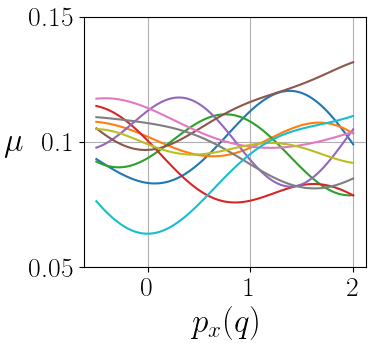}
 \caption{Samples from $\mu$.}
 \label{fig:friction}
 \vspace{-5mm}
 \end{minipage}%
 \end{wrapfigure}
 
\noindent randomized parameters $\omega$ following a uniform distribution, see Figure \ref{fig:friction}. 
The coefficient $\mu$ encodes epistemic uncertainty varying over space, which is more realistic than assuming a constant friction coefficient as in \cite{Drnach2021}. As discussed in Section \ref{sec:related_work}, this type of uncertainty is challenging to deal with.

The objective consists of jumping as far as possible (i.e., $\varphi(x(T))=-p_x(T)$) while minimizing the control effort $\ell(x,u)=\|u\|_2^2$ and limiting the risk of slippage \eqref{eq:hopper:noslip}. %
The resulting problem clearly takes the form of \ocp with no dynamics uncertainty (i.e., $\sigma=0$). Indeed, an important source of uncertainty in legged locomotion comes from uncertain terrain properties due to imperfect perception or inherent uncertainty in the problem. 
We are not aware of prior work in trajectory optimization tackling this formulation. %

We solve the sampled problem\rev{s} (\rev{using} $S=M=30$) with \ipopt \cite{ipopt2006} \rev{and} %
report results in Figure  \ref{fig:hopper:jumps}. 
By reducing the risk parameter $\alpha$, the \rev{proposed} trajectory optimizer is more cautious to avoid slip and returns shorter jumps,  
with slippage probability bounded by $\alpha$. %
In contrast, \rev{a deterministic baseline that only considers the mean terrain parameter $\mu(x,\omega)=\bar\mu$ returns longer jumps but} violates constraints $50\%$ of the time.

\begin{figure}[!t]
\centering
\includegraphics[width=1\linewidth]{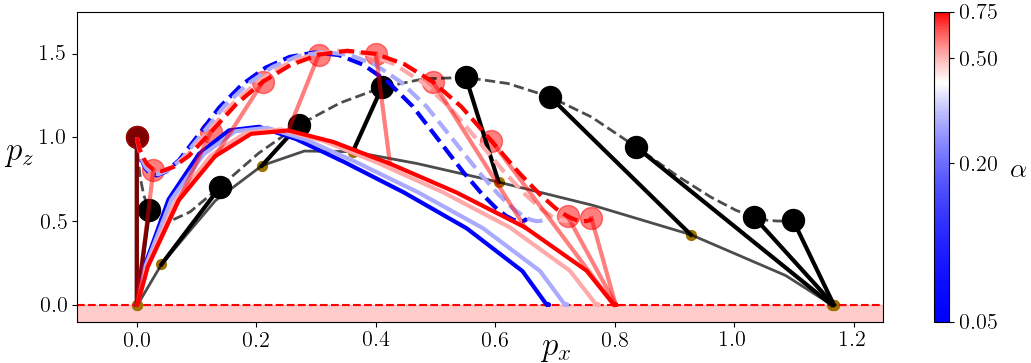}
\caption{Hopper system with uncertain terrain properties: jump trajectories (of the center of mass and the tip of the leg) for different risk parameter \rev{values} $\alpha$ and for the \rev{deterministic} baseline (in black).}
\label{fig:hopper:jumps}
\vspace{-6mm}
\end{figure}

\section{Conclusion} 
\label{sec:conclusion}
We proposed an efficient method for risk-averse trajectory optimization. 
This algorithm hinges on a continuous-time formulation with average-value-at-risk constraints. 
By approximating the problem using samples, %
we obtain a smooth, sparse program that allows for efficient numerical resolution.  
We demonstrated the speed and reliability of the method on problems with sources of epistemic and aleatoric uncertainty that are challenging to tackle with existing approaches.

Due to its generality in handling uncertain nonlinear dynamics and constraints, this work 
opens exciting future research directions. 
In particular, the approach could be interfaced with learned models: 
obstacles could be implicitly represented as deep SDFs \cite{Park2019} or neural radiance fields \cite{mildenhall2020nerf,Adamkiewicz2022}, %
deep trajectory forecasting models \cite{SalzmannIvanovicEtAl2020,IvanovicElhafsiEtAl2020}
could be used for planning in autonomous driving, 
learned terramechanics models \cite{Cunningham2017} could allow more robust legged locomotion,  
and dynamics could learned online, %
either to improve control performance \cite{deisenroth2015,hewing2018cautious}  or for active data-gathering  \cite{Zhang2021,LewEtAl2022} %
under risk constraints. %

\makeatletter
\newcommand*\smaller{%
  \@setfontsize\smaller{7}{8.}%
}
\makeatother

\let\oldbibliography\thebibliography
\renewcommand{\thebibliography}[1]{\oldbibliography{#1}
\setlength{\itemsep}{0pt}} %
\bibliographystyle{plainnat}
{
\smaller
\bibliography{ASL_papers,main}
}

\appendix

\subsection{Proofs of Theorem \ref{thm:optimality} and of Lemma \ref{lem:finite_sample_socp}}\label{appendix:proofs}

\begin{proof}[Proof of Theorem \ref{thm:optimality}]
Thanks to \A{1}-\A{4}, the solution to \sde is well-defined \cite{LeGall2016,BonalliBonnet2022} and all functions of interest are measurable  (see \cite[Appendix A.2]{LewBonalliEtAl2022}. In particular, the map $\omega\in\Omega\mapsto t+\alpha^{-1}\max(\sup_{s\in[0,T]}G(x_u(s,\omega),\xi(\omega))-t,0)\in\R$ is measurable for all $(u,t)\in\U\times\R$ since $z\mapsto\max(z, 0)$ is continuous). 
Further, using Kolmogorov's Lemma \cite[Theorem 2.9]{LeGall2016},  one can show that up to a modification, the map from controls $u\in\U$ to state trajectories $x_u$ is H\"older continuous (for the uniform norm on the space of continuous functions $C([0,T],\R^n)$) with probability one, see \cite[Appendix E]{LewBonalliEtAl2022}.

Thanks to \A{2} and since the $\max$ is $1$-Lipschitz continuous, the functions $G$ and $H$ are Lipschitz continuous. Thus, the maps $(u,t)\mapsto t+\alpha^{-1}\max(\sup_{s\in[0,T]}G(x_u(s),\xi)-t,0)$ and $u\mapsto H(x_u(T))$ are H\"older continuous with probability one. 

Thanks to \A{2}, %
the optimal risk parameter $t^\star$ is achieved in a bounded interval $\mathcal{T}\subset\R$, see %
\cite{Shapiro2014}. Thus, thanks to \A{3}, \ocp is a stochastic program over a compact subset of $\R^{z+1}$. 

The conclusion follows from \cite[Theorem 5.1]{LewBonalliEtAl2022}. %
\end{proof}

\section{Expectations concentration inequalities}
The proof of Lemma \ref{lem:finite_sample_socp} relies on a concentration inequality from \cite{LewBonalliEtAl2022}. Let $(\Omega,\G,\Prob)$ be a probability space, 
$d\in\N$, 
$\U\subset\R^d$ be a compact set, 
$h:\U\times\Omega\to\R$ be Carath\'eodory (i.e., $h(u,\cdot)$ is $\G$-measurable for all $u\in\U$ and $h(\cdot,\omega)$ is continuous $\Prob$-almost-surely), and consider the following two assumptions.
\begin{description}
\item[\namedlabel{B1}{(B1)}] $\Prob$-almost-surely, the map $u\mapsto h(u,\omega)$ is $\tilde\alpha$-H\"older continuous for some exponent $\tilde\alpha\in(0,1]$ and H\"older constant $M(\omega)$ satisfying $\E[M(\cdot)^2]<\infty$, such that 
\begin{equation*}
|h(u_1,\omega)-h(u_2,\omega)|\leq M(\omega)\|u_1-u_2\|_2^{\tilde\alpha}\ \ 
\forall u_1,u_2\in\U.
\end{equation*}
\item[\namedlabel{B2}{(B2)}] For some $\bar{h}<\infty$, $\Prob$-almost-surely, $\sup\limits_{u\in\U} |h(u,\omega)|\leq \bar{h}$. 
\end{description}
Let $\{\omega^i\}_{i=1}^M$ be $M\in\N$ independent and identically distributed (iid) samples of $\omega$.  
We have the following result.

\begin{lemma}[Concentration Inequality \cite{LewBonalliEtAl2022}]\label{lem:concent:unif_bounded}
Assume that $h$ satisfies \Bassum{1} and \Bassum{2}. Let $D=2\sup_{u\in\U}\|u\|_2$, 
$C=256$,  
$u_0\in\U$, 
$\Sigma_0$ denote the covariance matrix of $h(u_0,\cdot)$, and 
\begin{equation}
\tilde{C}=\left(
CD^{\frac{\tilde\alpha+1}{2}}d^{\frac{1}{2}}\E\left[M^2\right]^{\frac{1}{2}}\tilde\alpha^{-\frac{1}{2}}
+
\text{Trace}\left(\Sigma_0\right)^{\frac{1}{2}}
\right).
\end{equation}
Let $\epsilon>0$, $\beta\in(0,1)$, and the sample size $M\in\N$ be such than $M\geq\epsilon^{-2}(\tilde{C}+\bar{h}(2\log(1/\beta))^{\frac{1}{2}})^2$. 
Then, with $\bProb$-probability at least $1-\beta$ over the $M$ iid samples $\omega^i$, 
\begin{equation}
\sup_{u\in\U}\left|\frac{1}{M}\sum_{i=1}^Mh(u,\omega^i)-\E[h(u,\cdot)]\right|
\leq\epsilon.
\end{equation}
\end{lemma}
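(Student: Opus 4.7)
The plan is to establish this uniform concentration bound via the standard three-step empirical-process program: (i) reduce to the mean of the supremum using McDiarmid's bounded-differences inequality, (ii) symmetrize by Rademacher variables, and (iii) chain using the H\"older modulus supplied by \Bassum{1}. Write $Z_M := \Sup_{u\in\U}|\frac{1}{M}\sum_{i=1}^M h(u,\omega^i) - \E[h(u,\cdot)]|$. Step (i) will produce the $\bar h(2\log(1/\beta))^{1/2}$ summand in the sample-size threshold, while steps (ii)--(iii) will bound $\sqrt M\,\E[Z_M]$ by $\tilde C$.

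For step (i), \Bassum{2} gives $|h(u,\omega)|\leq\bar h$ uniformly, so altering one sample $\omega^i$ changes the empirical mean $\frac{1}{M}\sum_i h(u,\omega^i)$ by at most $2\bar h/M$ for every $u\in\U$; taking suprema, $Z_M$ enjoys bounded differences with constant $2\bar h/M$. McDiarmid's inequality then gives $\bProb(Z_M \geq \E[Z_M] + t) \leq \exp(-Mt^2/(2\bar h^2))$, and choosing $t = \bar h\sqrt{2\log(1/\beta)/M}$ contributes the claimed deviation with probability at least $1-\beta$.

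For steps (ii)--(iii), fix any $u_0\in\U$ and center via $g(u,\omega) := h(u,\omega)-h(u_0,\omega)$; the triangle inequality yields
\begin{equation*}
\E[Z_M] \leq \E\bigl|\tfrac{1}{M}\textstyle\sum_i h(u_0,\omega^i) - \E[h(u_0,\cdot)]\bigr| + \E\,\Sup_u\bigl|\tfrac{1}{M}\textstyle\sum_i g(u,\omega^i) - \E[g(u,\cdot)]\bigr|.
\end{equation*}
Jensen's inequality bounds the pointwise summand by $\sqrt{\V(h(u_0,\cdot))/M} = \tr(\Sigma_0)^{1/2}/\sqrt M$, accounting for the $\tr(\Sigma_0)^{1/2}$ piece of $\tilde C$. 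For the uniform summand, the classical symmetrization lemma gives $\E\,\Sup_u|\cdot| \leq 2\,\E\,\E_\varepsilon\Sup_u|\frac{1}{M}\sum_i\varepsilon_i g(u,\omega^i)|$ with iid Rademacher $\varepsilon_i$. Conditional on $\bomega$, this Rademacher process is sub-Gaussian with respect to the random pseudo-metric $\rho(u,v) := \bigl(\tfrac{1}{M^2}\sum_i(g(u,\omega^i)-g(v,\omega^i))^2\bigr)^{1/2}$, which by \Bassum{1} satisfies $\rho(u,v) \leq (\bar M_\bomega/\sqrt M)\|u-v\|^{\tilde\alpha}$ with $\bar M_\bomega := \bigl(\tfrac{1}{M}\sum_i M(\omega^i)^2\bigr)^{1/2}$. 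Since $\U\subset\R^d$ has diameter at most $D$, the $\rho$-covering numbers satisfy $\log\cN(\U,\rho,r) \lesssim (d/\tilde\alpha)\log(c_0 D^{\tilde\alpha}\bar M_\bomega/(r\sqrt M))$, and Dudley's entropy integral, after the change of variables $r=s^{\tilde\alpha}$, produces an in-expectation estimate of order $d^{1/2}\tilde\alpha^{-1/2}\bar M_\bomega D^{(\tilde\alpha+1)/2}/\sqrt M$. Taking outer expectation and invoking Jensen's inequality $\E[\bar M_\bomega]\leq\E[M(\cdot)^2]^{1/2}$ yields the remaining contribution to $\tilde C$; combining with step (i) and applying the threshold $M\geq\epsilon^{-2}(\tilde C+\bar h\sqrt{2\log(1/\beta)})^2$ concludes $Z_M\leq\epsilon$ with probability at least $1-\beta$.

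The hard part is the chaining argument: one must (a) handle that $\rho$ is a sample-dependent random pseudo-metric, so that covering numbers must be controlled deterministically in terms of $\bar M_\bomega$ as a random scale and only then taken to outer expectation; (b) track the $\tilde\alpha^{-1/2}$ scaling that emerges from the change of variables $r=s^{\tilde\alpha}$ in Dudley's integral; and (c) obtain the interpolated exponent $D^{(\tilde\alpha+1)/2}$ by balancing the $L^2$-diameter $\bar M_\bomega D^{\tilde\alpha}/\sqrt M$ of the process against its $L^\infty$ envelope $\bar h$ (equivalently, applying a Bernstein-style discretization at the finest scale of the chain in place of a pure Hoeffding step). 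All remaining pieces---McDiarmid, symmetrization, Jensen, and the final arithmetic---are classical and assemble routinely into the stated finite-sample threshold.
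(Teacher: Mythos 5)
The paper does not actually prove this lemma: it is imported verbatim from \cite{LewBonalliEtAl2022} and used as a black box, so there is no in-paper proof to compare against. Your reconstruction follows the canonical empirical-process route --- McDiarmid's bounded-differences inequality (with constant $2\bar h/M$, which is where \Bassum{2} and the $\bar h(2\log(1/\beta))^{1/2}$ summand enter) to reduce to $\E[Z_M]$, centering at $u_0$ plus Jensen for the $\text{Trace}(\Sigma_0)^{1/2}$ contribution, and symmetrization followed by Dudley chaining under the H\"older modulus of \Bassum{1} for the remainder --- and each of these steps is sound. The one place the sketch does not close is the exact form of $\tilde C$: the Dudley integral you set up, after the change of variables, yields a bound of order $d^{1/2}\tilde\alpha^{-1/2}\E[M^2]^{1/2}D^{\tilde\alpha}/\sqrt{M}$, i.e.\ with exponent $D^{\tilde\alpha}$ rather than the stated $D^{(\tilde\alpha+1)/2}$. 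Your step (c), which is meant to produce the interpolated exponent by balancing the $L^2$-diameter against the $L^\infty$ envelope, is asserted rather than carried out, and since $D^{(\tilde\alpha+1)/2}$ and $D^{\tilde\alpha}$ are not comparable uniformly in $D$ (the inequality flips at $D=1$), recovering the lemma with its exact stated constant would require following the specific chaining computation of \cite{LewBonalliEtAl2022}. This caveat is immaterial for how the lemma is used here, since Lemma \ref{lem:finite_sample_socp} only invokes it with constants $(\tilde C,\bar h)$ ``large enough''; your argument therefore establishes the substance of the result, up to the value of $\tilde{C}$.
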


Lemma \ref{lem:concent:unif_bounded} provides a high-probability finite-sample error bound for the sample average approximation of the expected value of $h(u,\cdot)$ that holds uniformly over all $u\in\U$. %

To prove Lemma \ref{lem:finite_sample_socp}, we define the map $Z:\U\times\Omega\to\R$ by 
\begin{equation}\label{eq:Zu}
Z_u(\omega)=\sup_{s\in[0,T]}
G(x_u(s,\omega),\xi(\omega)),
\end{equation}
which is bounded and H\"older continuous under assumptions \A{1}-\A{4}. Then, given a large-enough compact set $\mathcal{T}\subset\R$, we define the measurable map $g:(\mathcal{T}\times\U)\times\Omega\mapsto\R$,
\begin{equation}\label{eq:g}
g((t,u),\omega)\mapsto t+\alpha^{-1}\max(
Z_u(\omega)
-t, 0).
\end{equation}
Since $g$ is a composition of H\"older continuous functions %
and $\mathcal{T}$ is bounded, $g$ is also bounded and H\"older continuous, i.e., $g$ satisfies assumptions \Bassum{1} and \Bassum{2} over $(t,u)\in\mathcal{T}\times\mathcal{U}$. %

\begin{corollary}[Finite-Sample Error Bound]\label{cor:finite_sample_g_tu}
Given $M$ iid samples $\omega^i\in\Omega$, formulate the sample average approximation $\socp_M(\bomega)$ of \ocp. 
Define the map $g$ as in \eqref{eq:g}. 
Let $\epsilon>0$ and $\beta\in(0,1)$. 
Then, under assumptions \A{1}-\A{4},   assuming $M\geq\epsilon^{-2}(\tilde{C}+\bar{h}(2\log(1/\beta))^{\frac{1}{2}})^2$ for $(\tilde{C},\bar{h})$ large-enough, 
\begin{equation}
\sup_{(t,u)\in\mathcal{T}\times\U}\left|\frac{1}{M}\sum_{i=1}^Mg((t,u),\omega^i)-\E[g((t,u),\cdot)]\right|
\leq\epsilon
\end{equation}
with $\bProb$-probability at least $(1-\beta)$.
\end{corollary}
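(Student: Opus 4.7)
The plan is to verify that the map $g:(\mathcal{T}\times\U)\times\Omega\to\R$ defined in \eqref{eq:g} satisfies the hypotheses \Bassum{1} and \Bassum{2} of Lemma \ref{lem:concent:unif_bounded}, and then to invoke that lemma with parameter space $\mathcal{T}\times\U\subset\R^{z+1}$, which is compact by \A{3} and the assumption that $\mathcal{T}\subset\R$ is compact.

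First, I would analyze the auxiliary random variable $Z_u$ defined in \eqref{eq:Zu}. Uniform boundedness is immediate from \A{2}, which gives $|G(x,\xi)|\leq L$ for all $(x,\xi)$, hence $|Z_u(\omega)|\leq L$ almost surely, uniformly in $u\in\U$. For H\"older continuity in $u$, I would reuse the key fact invoked in the proof of Theorem \ref{thm:optimality}: up to a modification, the map $u\in\U\mapsto x_u\in C([0,T],\R^n)$ (endowed with the uniform norm) is H\"older continuous, $\Prob$-almost surely, with an $L^2$-integrable H\"older constant $\tilde M(\omega)$ (this follows from Kolmogorov's lemma as shown in \cite[Appendix E]{LewBonalliEtAl2022}). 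Composing with the $x$-Lipschitz function $G(\cdot,\xi)$ from \A{2} and taking the supremum over $s\in[0,T]$ preserves H\"older continuity with the same exponent, giving
\begin{equation*}
|Z_{u_1}(\omega)-Z_{u_2}(\omega)|\leq L\tilde M(\omega)\|u_1-u_2\|_2^{\tilde\alpha}
\end{equation*}
almost surely, with $\E[(L\tilde M)^2]<\infty$.

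Next, I would lift these properties from $Z_u$ to $g((t,u),\omega)=t+\alpha^{-1}\max(Z_u(\omega)-t,0)$. Since $\max(\cdot,0)$ is $1$-Lipschitz, the map $(t,z)\mapsto t+\alpha^{-1}\max(z-t,0)$ is Lipschitz in $(t,z)$ with constant depending only on $\alpha$. Combining with the H\"older property of $Z_u$ in $u$, and using compactness of $\mathcal{T}$ to get a uniform Lipschitz bound in $t$, one obtains the H\"older estimate
\begin{equation*}
|g((t_1,u_1),\omega)-g((t_2,u_2),\omega)|\leq M(\omega)\|(t_1,u_1)-(t_2,u_2)\|_2^{\tilde\alpha}
\end{equation*}
for a square-integrable $M(\omega)$, so \Bassum{1} holds on $\mathcal{T}\times\U$. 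Boundedness \Bassum{2} follows similarly: since $|Z_u(\omega)|\leq L$ and $t$ ranges over the bounded set $\mathcal{T}$, there is a finite $\bar h$ with $|g((t,u),\omega)|\leq\bar h$ almost surely, uniformly over $\mathcal{T}\times\U$.

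Finally, with $g$ verified to satisfy \Bassum{1}-\Bassum{2} on the compact parameter set $\mathcal{T}\times\U\subset\R^{z+1}$, I would apply Lemma \ref{lem:concent:unif_bounded} directly, which yields the claimed uniform finite-sample bound with probability at least $1-\beta$ as soon as $M\geq\epsilon^{-2}(\tilde C+\bar h(2\log(1/\beta))^{1/2})^2$, where $(\tilde C,\bar h)$ are the constants produced by that lemma for the H\"older and boundedness parameters derived above. The main obstacle is the step of transferring the almost-sure H\"older regularity of $u\mapsto x_u$ through the supremum over $s\in[0,T]$ inside $Z_u$ and then through the nondifferentiable $\max$ in $g$, while keeping the H\"older constant square-integrable; this is precisely where the $L^2$ moment provided by Kolmogorov's lemma (via \A{1} and \A{4}) and the Lipschitz property of $G$ (via \A{2}) are used, and it cannot be avoided because Lemma \ref{lem:concent:unif_bounded} is stated for functions whose modulus of continuity has a finite second moment.
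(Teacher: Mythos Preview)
Your proposal is correct and follows essentially the same route as the paper: verify that $g$ satisfies \Bassum{1}--\Bassum{2} on the compact parameter set $\mathcal{T}\times\U$ (via boundedness of $G$ from \A{2}, the almost-sure H\"older continuity of $u\mapsto x_u$ established in the proof of Theorem~\ref{thm:optimality}, and the Lipschitz property of $(t,z)\mapsto t+\alpha^{-1}\max(z-t,0)$), and then invoke Lemma~\ref{lem:concent:unif_bounded}. Your write-up is more explicit than the paper's two-line proof, but the argument is the same.
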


\begin{proof}
Under assumptions \A{1}-\A{4}, $g$ satisfies Assumptions \Bassum{1}-\Bassum{2}. The conclusion follows from Lemma \ref{lem:concent:unif_bounded}. 
\end{proof}

 The desired result (Lemma \ref{lem:finite_sample_socp}) follows %
 from Corollary \ref{cor:finite_sample_g_tu}. %

\textit{Proof of Lemma \ref{lem:finite_sample_socp}:}
For any $u\in\mathcal{U}$, define the random variable $Z_u(\omega)=\eqref{eq:Zu}$, so that 
$$
\avar_\alpha\left(\sup_{s\in[0,T]}
G(x_{u(\bar\omega)}(s),\xi)
\right)
=
\avar_\alpha\left(Z_{u(\bar\omega)}
\right).
$$
Then, given an interval $\mathcal{T}$ large-enough containing the optimal risk parameter $t^\star$ of \ocp %
(see the proof of Theorem 1 and \cite{Shapiro2014}), define the function  $g:(\mathcal{T}\times\U)\times\Omega\mapsto\R$ as in \eqref{eq:g}. Denote the solution to $\socp_M(\bar\omega)$ by $(t(\bar\omega),u(\bar\omega))$ and
$$
g((t(\bar\omega),u(\bar\omega)),\omega)
=
t(\bar\omega)+\alpha^{-1}
\max(Z_{u(\bar\omega)}(\omega)-t(\bar\omega),0).
$$
Since $(t(\bomega),u(\bomega))$ solves  $\socp_M(\bar\omega)$, 
\begin{align*}
\frac{1}{M}\sum_{i=1}^Mg((t(\bar\omega),u(\bar\omega)),\omega^i)
&
\\[-4mm]
&\hspace{-3cm}=
t(\bar\omega)+\frac{1}{\alpha M}\sum_{i=1}^M\max(Z_{u(\bar\omega)}(\omega^i)-t(\bar\omega),0) \leq 0.
\end{align*}
By Corollary \eqref{cor:finite_sample_g_tu}, with probability at least $1-\beta$, %
\begin{equation*}
\sup_{(t,u)\in\mathcal{T}\times\U}\left|\frac{1}{M}\sum_{i=1}^Mg((t,u),\omega^i)-\E[g((t,u),\cdot)]\right|
\leq\epsilon.
\end{equation*}
From the last two inequalities\footnote{Assuming $t(\bomega)\in\mathcal{T}$, which can be enforced by restricting the search of solutions to $\socp_M(\bomega)$ to the compact set $\U\times\mathcal{T}$ with $\mathcal{T}$ arbitrarily large.}, with probability at least $1-\beta$,
$$
\E[g((t(\bar\omega),u(\bar\omega)),\cdot)]
\leq\epsilon.
$$
We obtain that
\begin{align*}
\avar_\alpha(Z_{u(\bar\omega)})
&=
\inf_{t\in\R}\Big(
t+\alpha^{-1}\E[\max(Z_{u(\bar\omega)}-t,0)]
\Big)
\\
&\leq
t(\bar\omega)+\alpha^{-1}\E[\max(Z_{u(\bar\omega)}-t(\bar\omega),0)].
\\
&=
\E[t(\bar\omega)+\alpha^{-1}\max(Z_{u(\bar\omega)}-t(\bar\omega),0)].
\\
&=
\E[g((t(\bar\omega),u(\bar\omega)),\cdot)]
\\
&\leq\epsilon
\end{align*}
with probability at least $1-\beta$, concluding the proof. 
\hfill $\blacksquare$

\subsection{Smooth reformulation \eqref{eq:socp_dt:avar_ref} of \avar constraint in $\socphat$}
$\socphat$ is a deterministic non-convex program. Due to the $\max$ operation, its inequality constraint \eqref{eq:socp_dt:avar} from the \avar constraint is not smooth even if each term in $G$ is smooth Many optimization tools use gradient information and assume that constraints are twice differentiable, which is not the case with \eqref{eq:socp_dt:avar}. 
This constraint can be reformulated by introducing $M$ variables $y_i\in\R$ and reformulating  \eqref{eq:socp_dt:avar} as follows:

\vspace{-3mm}

{\small
\begin{gather}
\quad \, 
t+
\frac{1}{\alpha M}\sum_{i=1}^M
\max\left(
\max_{k=0,\dots,S}
G(x_u^i(k\Delta t),\xi^i)-t,
0\right)
\leq 0
\nonumber
\\
\iff
\nonumber
\\
\begin{cases}
(M\alpha)t+\sum_{i=1}^My_i
\leq 0
\nonumber
\\
y_i \geq 0
&\forall i=1,\dots,M
\\
y_i \geq
\max_{k=0,\dots,S}
G(x_u^i(k\Delta t),\xi^i)-t
&\forall i=1,\dots,M
\end{cases}
\nonumber
\\
\iff
\nonumber
\\
\begin{cases}
(M\alpha)t+\sum_{i=1}^My_i
\leq 0
\nonumber
\\
y_i \geq 0
&\forall i=1,\dots,M
\\
y_i \geq
\max_{k=0,\dots,S}^{j=1,\dots,N}
G_j(x_u^i(k\Delta t),\xi^i)-t
&\forall i=1,\dots,M
\end{cases}
\nonumber
\\
\iff
\nonumber
\\
\qquad 
\begin{cases}
\hspace{3.2mm}
(M\alpha)t+\sum_{i=1}^My_i
\leq 0
\\[1mm]
\hspace{26mm}0\leq y_i 
&\forall i=1,\dots,M
\\
G_j(x_u^i(k\Delta t),\xi^i)-t
\leq y_i
&
{\small\begin{cases}
\forall i=1,\dots,M
\\
\forall j=1,\dots,N
\\
\forall k=0,\dots,S.
\end{cases}}%
\end{cases}
\qquad \eqref{eq:socp_dt:avar_ref}
\nonumber
\end{gather}
}%
In contrast to \eqref{eq:socp_dt:avar}, the set of constraints  \eqref{eq:socp_dt:avar_ref} is smooth if every $G_j$ is smooth. With \eqref{eq:socp_dt:avar_ref} and the additional variable $y=(y_1,\dots,y_M)$, we reformulate $\socphat$ as follows:
$$
\socphat_M(\bomega)
$$
$$
\inf_{\substack{u\in\U, t\in\R, y\in\R^M}}
\ \eqref{eq:socp_dt:cost}(u)\  
\text{ such that }\ 
\eqref{eq:socp_dt:end}, \,  \eqref{eq:socp_dt:sde}, \, \eqref{eq:socp_dt:x0}, \, \eqref{eq:socp_dt:avar_ref}.
$$
$\socphat$ %
can be solved using off-the-shelf optimization tools. %

\subsection{Additional implementation details and results}
We open-source code to reproduce the experiments in Section \ref{sec:results} at \scalebox{0.965}{\url{https://github.com/StanfordASL/RiskAverseTrajOpt}}.  
Our implementation is in Python and uses \textrm{JAX} \cite{jax2018github} to evaluate constraints gradients. 
To solve $\socphat$, for the drone planning and autonomous driving problems, we use a standard sequential convex programming (SCP) scheme. We use \osqp \cite{Stellato2020} to solve the convexified problems at each SCP iteration.  For the hopper system, we solve $\socphat$ using \ipopt \cite{ipopt2006} due to its improved numerical stability compared to SCP.

\subsubsection{Drone planning problem}
The computation time generally depends on the choice of solver, desired accuracy, and use case. For example, using our proposed approach within a risk-aware MPC controller would allow warm-starting for faster convergence, see also \cite{Diehl2005}. 
In this work, we use $\bar{u}_s=0$ as a zero initial guess, disable the obstacle avoidance risk constraint for the first two SCP iterations to obtain more robust convergence, set the error tolerance threshold for \osqp to $10^{-3}$, and use the normalized $L^2$-error over control iterates $\|u^k-u^{k-1}\|/\|u^k\|$ to detect the  convergence of SCP. 
We report computation times and accuracy statistics in Figure \ref{fig:drone:comp_times_SCP}, %
evaluated on a laptop with an i7-10710U CPU (1.10 GHz) and 16 GB of RAM. We observe that $10$ SCP iterations are sufficient to obtain accurate results. Albeit our implementation is written in Python, computation times are reasonable and amenable to real-time applications. By warm-starting within an MPC loop, one could achieve a replanning frequency of $30\,\textrm{Hz}$ using $M=30$ samples and by returning the solution after a single SCP iteration.   
Finally, computation time scales roughly linearly in the number of samples. Parallelization (e.g., on a GPU) would enable using a larger numbers of samples $M$ while retaining speed, albeit our results show that a small number of samples is sufficient to obtain feasible solutions.

\begin{figure}[!t]
\centering
\includegraphics[width=1\linewidth]{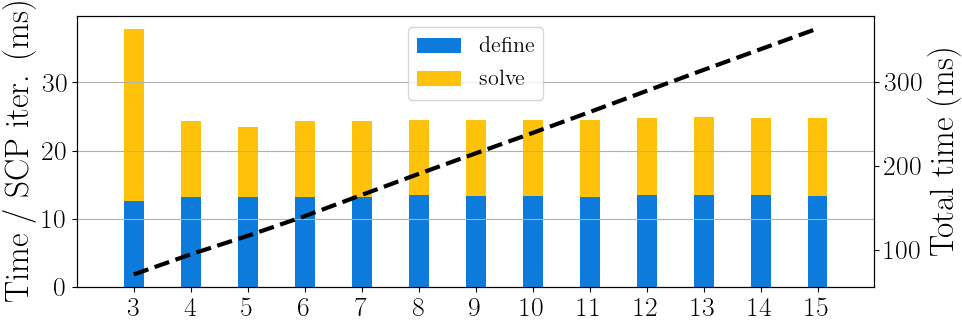}
\vspace{-3mm}
\includegraphics[width=1\linewidth]{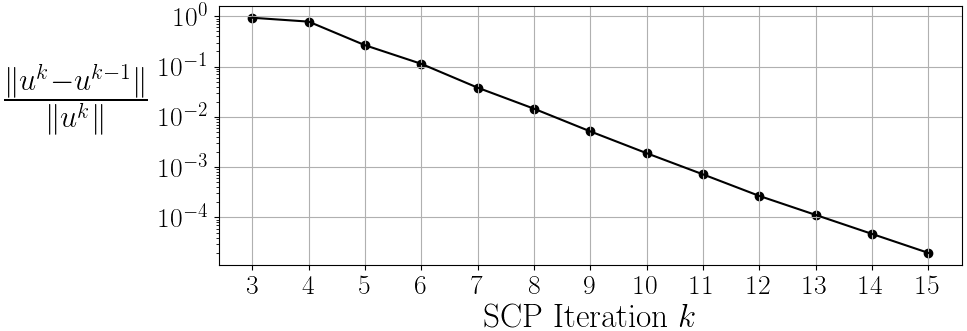}

\vspace{2mm}

\caption{Drone planning problem: median computation time and SCP iteration error statistics over $30$ experiments ($\alpha=0.05$, $M=30$).}
\label{fig:drone:comp_times_SCP}

\vspace{3mm}
\centering
\includegraphics[width=1\linewidth]{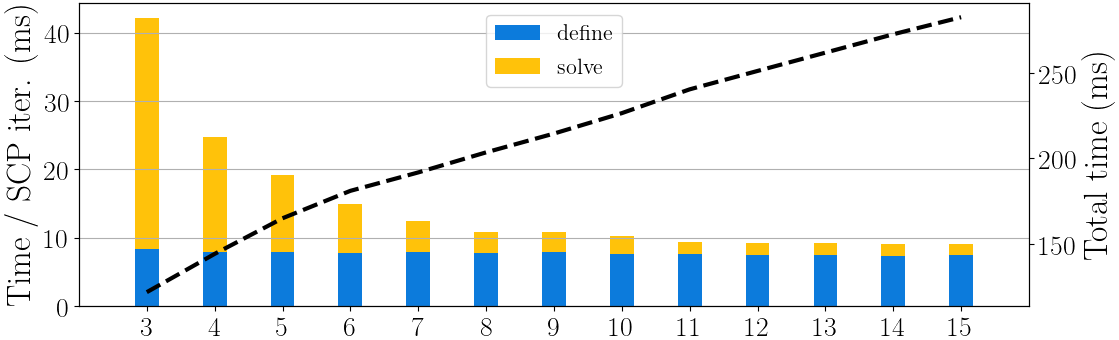}
\vspace{-3mm}
\includegraphics[width=1\linewidth]{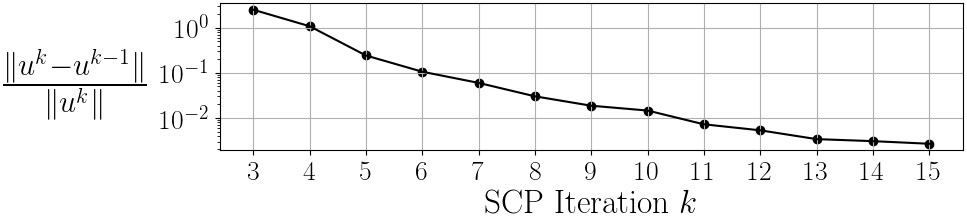}

\vspace{2mm}

\begin{minipage}{0.5\linewidth}
\caption{Autonomous driving planning problem. Median statistics over $30$ experiments ($\alpha=0.02$, $M=50$). 
Top plots: computation times and SCP iteration errors. 
Right plots: computation times for $10$ SCP iterations and different risk parameters $\alpha$.}
\label{fig:driving:comp_times}
\end{minipage}
\hspace{0.02\linewidth}
\begin{minipage}{0.45\linewidth}
\includegraphics[width=1\linewidth]{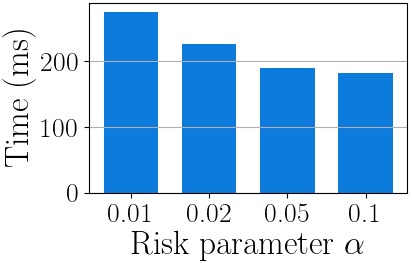}
\end{minipage}

\vspace{-7mm}
\end{figure}

\subsubsection{Autonomous driving problem}
Computation times and SCP statistics are reported in Figure \ref{fig:driving:comp_times}. %
Results are comparable to those for the drone planning problem, again  demonstrating the real-time capabilities of the approach. Computation times do not appear to be sensitive to the risk parameter $\alpha$.

\subsubsection{Hopper robot problem}
We consider the hopper robot in \cite{LeCleac2023} with $x=(q,\dot{q})$ where $q=(p_x, p_z, \theta, r)\in\R^4$, $u=(\tau,f)\in\R^2$. The system follows deterministic dynamics
\begin{equation*}
M\ddot{q}(t)+C=B(q(t))u(t)+J_\text{c}(q(t))^\top\lambda(t), 
\
x(0)=x^0,
\quad\eqref{eq:dynamics:hopper}
\end{equation*}
where $M\in\R^{4\times 4}$ is the inertia matrix, $C\in\R^4$ comprises of Coriolis and conservative forces, $B(q(t))\in\R^{4\times 2}$ is the control matrix, $J_\textrm{c}(q(t))\in\R^{4\times 2}$ is the contact Jacobian, and 
$\lambda=(\lambda_x,\lambda_z)\in\R^2$ are contact forces that must satisfy \eqref{eq:hopper}. 

Monte-Carlo validation of the satisfaction of the risk constraint for the trajectories represented in Figure \ref{fig:hopper:jumps} (obtained with a sample size $M=30$) are provided in 
Table \ref{table:hopper:montecarlo}.
 
\begingroup
\def\arraystretch{1}
\setlength\tabcolsep{1.6mm}
\begin{table}[!t]
\caption{Hopper system. Monte-Carlo validation for $10^5$ samples of the terrain parameter $\mu(q,\omega)$. We report the percentage of failures (the number of times the no-slip constraint in \eqref{eq:hopper:noslip} is violated), 
 the empirical $\avar_\alpha$ value, 
 and the final jump distance $p_x(T)$.}
\label{table:hopper:montecarlo}
\centering\begin{tabular}{cc|c|c|c|c|}
\cline{3-6}
& & $\alpha=5\%$ & $\alpha=10\%$ & $\alpha=20\%$ & $\alpha=50\%$
\\ 
\cline{1-6}
\multicolumn{1}{ |c  }{\multirow{3}{*}{$\textrm{SAA}_\alpha$} } &
\multicolumn{1}{ |c| }{\begin{tabular}{c}constraints\\ violations
\end{tabular}} & $0.5\%$ & $1.2\%$ & $2.2\%$ & $7.1\%$
\\
\cline{2-6}
\multicolumn{1}{ |c  }{}                        &
\multicolumn{1}{ |c| }{$\avar_\alpha$} & 
$-0.06$ & $-0.08$ & $-0.09$ & $-0.10$
\\
\cline{2-6}
\multicolumn{1}{ |c  }{}                        &
\multicolumn{1}{ |c| }{$p_x(T)$} & $0.58$ & $0.61$ & $0.64$ & $0.69$
\\
\cline{1-6}
\multicolumn{1}{ |c  }{\multirow{2}{*}{Baseline} } &
\multicolumn{1}{ |c| }{\begin{tabular}{c}constraints\\ violations
\end{tabular}} & \multicolumn{4}{ c| }{$50.2\%$}	
\\
\cline{2-6}
\multicolumn{1}{ |c  }{}                        &
\multicolumn{1}{ |c| }{$p_x(T)$} & \multicolumn{4}{ c| }{$1.10$}
\\
\cline{1-6}
\end{tabular}
\end{table}
\endgroup

\subsection{
\rev{Chance-constrained baseline with risk allocation}}
\rev{The risk-averse baseline used in the comparisons tackles the discrete-time joint-chance-constrained problem
\begin{subequations}
\label{eq:discrete_time:joint_cc_problem}
\begin{align}
\label{eq:discrete_time:cost}
\inf_{u\in U^S}
\ \  
&\E\left[
\sum_{k=0}^{S-1}\ell(x_k,u_k) 
+
\varphi(x_S)
\right]
\\
\ \, \text{s.t.}
\quad
&\Prob\bigg(\max_{k=1,\dots,S}G(x_k,\xi)>0
\bigg)\leq \alpha,
\label{eq:discrete_time:joint_cc}
\\
\label{eq:discrete_time:equality}
&\E[H(x_S)]=0,
\\
\label{eq:discrete_time:dynamics}
&
x_{k+1}
=b(x_k,u_k,\xi)\Delta t+\sigma(x_k,u_k,\xi)w_k, 
\end{align}
\end{subequations}
where $G(x_k,\xi)=\max_{j=1,\dots,N}
G_j(x_k,\xi)$ (see Section \ref{sec:problem_formulation}), $\Delta t=T/S$, $U^S=U\times\dots\times U$ ($S$ times), and each $w_k$ is (iid) Gaussian-distributed of mean $0$ and variance $\Delta t$.}

\rev{To conservatively reformulate the joint chance constraint \eqref{eq:discrete_time:joint_cc}, this baseline uses Boole's inequality, which states that
\begin{align}
\label{eq:discrete_time:Booles}
\hspace{-2mm}
\Prob\Big(\max_{k=1,\dots,S}
G(x_k,\xi)>0
\Big)
& 
\leq 
\sum_{k=1}^S
\sum_{j=1}^N
\Prob\left(
G_j(x_k,\xi)>0
\right).
\end{align}
Thus, by introducing $NS$ variables $\alpha_{jk}\in(0,1)$, the optimization problem in \eqref{eq:discrete_time:joint_cc_problem} can be replaced by the more conservative optimization problem with \textit{pointwise} chance constraints
\begin{subequations}
\label{eq:discrete_time:pointwise_cc_problem}
\begin{align}
\inf_{(u_k,\alpha_k)}
\ \  
&\eqref{eq:discrete_time:cost}
\quad
\text{s.t.}
\quad
\eqref{eq:discrete_time:equality},\, \eqref{eq:discrete_time:dynamics},
\\
&\Prob\Big(
G_j(x_k,\xi)>0
\Big)\leq \alpha_{jk}\hspace{-1mm}
&&{\small\left(
\begin{matrix}
\, \forall j=1,\dots,N
\\
\forall k=1,\dots,S
\end{matrix}
\right)}
\label{eq:discrete_time:pointwise_cc}
\\
&\alpha_{jk}>0\ 
&&{\small\left(
\begin{matrix}
\, \forall j=1,\dots,N
\\
\forall k=1,\dots,S
\end{matrix}
\right)}
\\
&\sum_{k=1}^S\sum_{j=1}^N\alpha_{jk}\leq\alpha.
\end{align}
\end{subequations}
To approximately solve this problem \eqref{eq:discrete_time:pointwise_cc_problem}, we use the standard method described next. 
The state trajectory is modeled as Gaussian-distributed ($x_k\sim\mathcal{N}(\mu_k,\Sigma_k)$) with mean and covariance trajectories computed approximately with a linearization approach \citep{LewBonalliEtAl2020}. 
Since the terms $G_j(x_k,\xi)$ correspond to collision avoidance constraints with convex obstacles in the drone and driving problems, we linearize the terms $G_j(x_k,\xi)$ in $x_k$, which enables reformulating the corresponding pointwise chance constraints in \eqref{eq:discrete_time:pointwise_cc} using the $(1-\alpha_k)$-th quantiles of the inverse
cumulative function of the standard normal distribution \citep[Equations (5) and (26)]{LewBonalliEtAl2020}. The resulting program is solved using off-the-shelf optimization techniques.}

\rev{As discussed in Remark \ref{remark:pointwise_cc} and in details in \cite{JansonSchmerlingEtAl2015b,FreyRSS2020}, this baseline is conservative due to the use of Boole's inequality in \eqref{eq:discrete_time:Booles}. Using this inequality, this baseline neglects correlations of uncertainty. As such, the conservatism of this baseline increases with the number $S$ of discretization timesteps \cite{JansonSchmerlingEtAl2015b,FreyRSS2020}. }

\end{document}